\newtheorem{proposition}{Proposition}
\newtheorem*{theorem*}{Theorem}
\newcommand{\acro}[1]{\textsc{\MakeLowercase{#1}}}
\newcommand{\E}{\mathbb{E}}
\newcommand{\M}{\mathcal{M}}
\newcommand{\R}{\mathbb{R}}
\newcommand{\T}{\mathcal{T}}
\newcommand{\X}{\mathcal{X}}
\newcommand{\simiid}{\stackrel{iid}{\sim}}
\newcommand{\tp}{^\mathsf{T}}
\newcommand{\ud}{\mathrm{d}}
\newcommand{\abs}[1]{\lvert #1 \rvert}
\newcommand{\Abs}[1]{\left\lvert #1 \right\rvert}
\newcommand{\norm}[1]{\lVert #1 \rVert}
\newcommand{\Norm}[1]{\left\lVert #1 \right\rVert}
\DeclareMathOperator{\erfc}{erfc}
\DeclareMathOperator{\tr}{tr}
\DeclareMathOperator{\Cov}{Cov}
\DeclareMathOperator{\Var}{Var}
\DeclareMathOperator{\Unif}{Unif}
\DeclareMathOperator*{\argmax}{argmax}
\DeclareMathOperator*{\argmin}{argmin}
\newcommand{\httpurl}[1]{\href{http://#1}{\nolinkurl{#1}}}
\title{On the Error of Random Fourier Features}
\author{ {\bf Dougal J.~Sutherland} \\
Carnegie Mellon University\\
Pittsburgh, PA \\
\texttt{dsutherl@cs.cmu.edu} \\
\And
{\bf Jeff Schneider}   \\
Carnegie Mellon University \\
Pittsburgh, PA \\
\texttt{schneide@cs.cmu.edu} \\
}
\begin{document}

\maketitle

% \section{FIRST LEVEL HEADINGS}
% \subsection{SECOND LEVEL HEADING}
% \subsubsection{Third Level Heading}
% \vskip .5pc % Fourth Level Heading

\begin{abstract}
Kernel methods give powerful, flexible, and theoretically grounded approaches to solving many problems in machine learning. The standard approach, however, requires pairwise evaluations of a kernel function, which can lead to scalability issues for very large datasets. \Textcite{rks} suggested a popular approach to handling this problem, known as random Fourier features. The quality of this approximation, however, is not well understood. We improve the uniform error bound of that paper, as well as giving novel understandings of the embedding's variance, approximation error, and use in some machine learning methods. We also point out that surprisingly, of the two main variants of those features, the more widely used is strictly higher-variance for the Gaussian kernel and has worse bounds.
\end{abstract}

\section{INTRODUCTION}

Kernel methods provide an elegant, theoretically well-founded, and powerful
approach to solving many learning problems.
Since traditional algorithms require
the computation of a full $N \times N$ pairwise kernel matrix
to solve learning problems on $N$ input instances, however,
scaling these methods to large-scale datasets containing
more than thousands of data points has proved challenging.
\Textcite{rks} spurred interest in one very attractive approach:
approximating a continuous shift-invariant kernel $k : \X \times \X \to \R$
by
\[
    k(x, y) \approx z(x)\tp z(y) =: s(x, y)
,\]
where $z : \X \to \R^D$.
Then primal methods in $\R^D$ can be used,
allowing most learning problems to be solved in $O(N)$ time \parencite[e.g.][]{joachims2006}.
Recent work has also exploited these embeddings
in some of the most-scalable kernel methods to date \parencite{dai:dsg}.

\Textcite{rks} give two such embeddings,
based on the Fourier transform $P(\omega)$ of the kernel $k$:
one of the form
\[
    \tilde{z}(x) := 
    \sqrt{\frac{2}{D}}
    \begin{bmatrix}
        \sin(\omega_1\tp x)
     \\ \cos(\omega_1\tp x)
     \\ \vdots
     \\ \sin(\omega_{D/2}\tp x)
     \\ \cos(\omega_{D/2}\tp x)
    \end{bmatrix}
    ,\;
    \omega_i \simiid P(\omega)
\label{eq:z-tilde}
\]
and another of the form
\[
    \breve{z}(x) := 
    \sqrt{\frac{2}{D}}
    \begin{bmatrix}
        \cos(\omega_1\tp x + b_1)
     \\ \vdots
     \\ \cos(\omega_D\tp x + b_D)
    \end{bmatrix}
    ,\;
    \begin{aligned}
        \omega_i &\simiid P(\omega)
     \\ b_i &\simiid \Unif_{[0, 2\pi]}
    \end{aligned}
\label{eq:z-breve}
.\]
Bochner's theorem \parencite*{bochner}
guarantees that for any continuous positive-definite function $k(x - y)$,
its Fourier transform will be a nonnegative measure;
if $k(0) = 1$, it will be properly normalized.
Letting $\tilde{s}$ be the reconstruction based on $\tilde{z}$
and $\breve{s}$ that for $\breve{z}$,
we have that:
\begin{align}
      \tilde{s}(x, y)
  & = \frac{1}{D/2} \sum_{i=1}^{D/2} \cos(\omega_i \tp (x - y))
\label{eq:s-tilde}
\\    \breve{s}(x, y)
  & = \frac{1}{D} \sum_{i=1}^D \cos(\omega_i\tp (x - y)) + \cos(\omega_i\tp(x + y) + 2b_i)
\label{eq:s-breve}
.\end{align}
Letting $\Delta := x - y$, we have:
\begin{gather+}
    \E \cos(\omega\tp \Delta)
    = \Re \int\! e^{\omega\tp\Delta \sqrt{-1}} \ud P(\omega)
    = \Re k(\Delta)
\label{eq:emb-mean}
\\
    \E_\omega \E_b \cos(\omega\tp(x + y) + 2 b)
    = 0
\label{eq:emb-noise-mean}
.\end{gather+}
% since $k$ is real-valued
% and $b$ is independent of $\omega$.
Thus each $s(x, y)$ is a mean of bounded terms with expectation $k(x, y)$.
For a given embedding dimension $D$,
it is not immediately obvious which approximation is preferable:
$\breve{z}$ gives twice as many samples for $\omega$,
but adds additional (non-shift-invariant) noise.
The academic literature seems split on the issue:
of the first 100 papers citing \textcite{rks} in a Google Scholar search,
15 used either $\tilde z$ or the equivalent complex formulation,
14 used $\breve z$,
28 did not specify,
and the remainder didn't use the embedding.
(None discussed that there was a choice.)
Not included in the count are  are Rahimi and Recht's later work \parencite*{rahimi-nips,rahimi-allerton},
which used $\breve z$;
indeed, post-publication revisions of the original paper only discuss $\breve{z}$.
Practically,
we are aware of three implementations in machine learning libraries,
each of which use $\breve z$ at the time of writing:
scikit-learn \parencite{scikit-learn},
Shogun \parencite{shogun},
and JSAT \parencite{jsat}.

We show that $\tilde z$ is superior for the popular Gaussian kernel,
as well as how to decide which to use for other kernels.

The primary previous analyses of these embeddings,
outside the one in the original paper,
have been by \textcite{rahimi-nips},
who bound the increase in error of empirical risk estimates
when learning models in the induced \acro{rkhs},
and by \textcite{yang:nystroem-vs-fourier},
who compare the ability of the Nystr\"om and Fourier embeddings
to exploit eigengaps in the learning problem.
We instead study the approximation directly,
providing a complementary view of the quality of these embeddings.

\Cref{sec:error:variance} studies the variance of each embedding,
showing that which is preferable depends on the kernel as well as the particular value of $\Delta$,
but for the popular Gaussian kernel $\tilde{s}$ is uniformly lower-variance.
\Cref{sec:error:uni} studies uniform convergence bounds,
tightening constants in the original $\tilde z$ bound
and proving a comparable one (with worse constants) for $\breve z$,
bounding the expectation of the maximal error,
and providing exponential concentration about the mean.
\Cref{sec:error:l2} studies the $L_2$ convergence of each approximation;
$\tilde{z}$ is again superior for the Gaussian kernel.
\Cref{sec:downstream} discusses the effect of this approximation error
when used in various machine learning methods.
\Cref{sec:experiments} evaluates the two embeddings and the bounds empirically.

\section{APPROXIMATION ERROR} \label{sec:error}

We will give various analyses of the error due to each approximation.

\subsection{VARIANCE} \label{sec:error:variance}

\Cref{eq:emb-mean,eq:emb-noise-mean} establish that
$\E s(\Delta) = k(\Delta)$.
What about the variance?
We have that
\begin{align}
       \Cov&\left( \tilde{s}(\Delta), \tilde{s}(\Delta') \right)
\\& =  \Cov\left( \frac{2}{D} \sum_{i=1}^{D/2} \cos( \omega_i\tp \Delta ),
                  \frac{2}{D} \sum_{i=1}^{D/2} \cos( \omega_i\tp \Delta' ) \right)
\\& =  \frac{2}{D} \Cov\left( \cos(\omega\tp\Delta), \cos(\omega\tp\Delta') \right)
% \\& =  \frac{1}{M} \left[
%          \tfrac12 \E\left[ \cos\left( \omega\tp(\Delta + \Delta') \right) \right]
%        + \tfrac12 \E\left[ \cos\left( \omega\tp(\Delta + \Delta') \right) \right]
%        - \E\left[ \cos\left( \omega\tp\Delta \right) \right]
%          \E\left[ \cos\left( \omega\tp\Delta \right) \right]
%        \right]
\\& =  \frac{2}{D} \left[
         \tfrac12 k(\Delta - \Delta')
       + \tfrac12 k(\Delta + \Delta')
       - k(\Delta) k(\Delta')
       \right]
\end{align}
using $\cos(\alpha) \cos(\beta) = \tfrac12 \cos(\alpha + \beta) + \tfrac12 \cos(\alpha - \beta)$
and also
$\E \cos(\omega\tp\Delta) = k(\Delta)$.
Thus
\[
    \Var \tilde s(\Delta) = \frac1D \left[ 1 + k(2 \Delta) - 2 k(\Delta)^2 \right]
    \label{eq:var-s-tilde}
.\]
Similarly, denoting $x + y$ by $t$,
\begin{align}
       \Cov&\left( \breve{s}(x, y), \breve{s}(x', y') \right)
% \\& =  \Cov\left(
%          \frac{1}{D} \sum_{i=1}^D \cos( \omega_i\tp \Delta ) + \cos(\omega_i\tp t + 2b_i) ,
% \right.\\&\phantom{=\Cov\Bigg(}\left.
%          \frac{1}{D} \sum_{i=1}^D \cos( \omega_i\tp \Delta' ) + \cos(\omega_i\tp t + 2b_i)
%        \right)
\\& =  \frac{1}{D} \Cov\left(
         \cos(\omega\tp\Delta) + \cos(\omega\tp t + 2b),
\right.\\&\phantom{=\frac{1}{D}\Cov\big(}\left.
         \cos(\omega\tp\Delta') + \cos(\omega\tp t' + 2b)
       \right)
% \\& =  \frac{1}{D} \left[
%         \Cov\left( \cos(\omega\tp\Delta), \cos(\omega\tp\Delta') \right)
% \right.\\&\qquad\left.
%       + \Cov\left( \cos(\omega\tp\Delta), \cos(\omega\tp t' + 2b) \right)
% \right.\\&\qquad\left.
%       + \Cov\left( \cos(\omega\tp t + 2b), \cos(\omega\tp\Delta') \right)
% \right.\\&\qquad\left.
%       + \Cov\left( \cos(\omega\tp t + 2b), \cos(\omega\tp t' + 2b) \right)
%       \right]
% \\& =  \frac{1}{D} \left[
%          % \tfrac12 k(\Delta - \Delta') + \tfrac12 k(\Delta + \Delta') - k(\Delta) k(\Delta')
%          \Cov\left( \cos(\omega\tp\Delta), \cos(\omega\tp\Delta') \right)
% \right.\\&\qquad\left.
%        + \E\left[ \cos(\omega\tp\Delta) \cos(\omega\tp t' + 2 b) \right]
%        - \E\left[ \cos(\omega\tp\Delta) \right]
%          \E\left[ \cos(\omega\tp t' + 2 b) \right]
% \right.\\&\qquad\left.
%        + \E\left[ \cos(\omega\tp + 2b) \cos(\omega\tp \Delta') \right]
%        - \E\left[ \cos(\omega\tp + 2b) \right]
%          \E\left[ \cos(\omega\tp \Delta') \right]
% \right.\\&\qquad\left.
%        + \E\left[ \cos(\omega\tp t + 2b) \cos(\omega\tp t' + 2b) \right]
%        - \E\left[ \cos(\omega\tp t + 2b) \right]
%          \E\left[ \cos(\omega\tp t'+ 2b) \right]
%        \right]
% \\& =  \frac{1}{D} \left[
%          \Cov\left( \cos(\omega\tp\Delta), \cos(\omega\tp\Delta') \right)
% \right.\\&\qquad\left.
%        + \tfrac12 \E \cos(\omega\tp (t + t') + 4b)
%        + \tfrac12 \E \cos(\omega\tp (t - t'))
%        \right]
\\& =  \frac{1}{D} \left[
         \tfrac12 k(\Delta - \Delta')
       + \tfrac12 k(\Delta + \Delta')
       - k(\Delta) k(\Delta')
\right.\\&\qquad\left.
       + \tfrac12 k(t - t')
       \right]
\end{align}
which gives
\begin{equation+}
    \Var \breve s(x, y)
    = \frac{1}{D} \left[
        1 + \tfrac12 k(2\Delta) - k(\Delta)^2
    \right]
    \label{eq:var-s-breve}
.\end{equation+}
Thus $\tilde s$ has lower variance than $\breve s$ if
\[
    \Var \cos(\omega\tp\Delta)
    = \frac12 + \frac12 k(2 \Delta) - k(\Delta)^2
    \le \frac12
    \label{eq:var-cos-wd}
.\]
The Gaussian kernel
$k(\Delta) = \exp\left( - \frac{\Norm\Delta^2}{2 \sigma^2} \right)$
has
\[
    \Var \cos(\omega\tp\Delta)
    =
    \frac12
    \left(
        1
      - \exp\left( - \frac{\Norm\Delta^2}{\sigma^2} \right)
    \right)^2
    \le \frac12
,\]
so that $\tilde z$ is always lower-variance than $\breve z$,
and the difference in variance is greatest when $k(\Delta)$ is largest.
This is illustrated in \cref{fig:rbf-var}.

\begin{figure}[bt]
  \centering
% k[r_] := Exp[-r^2/2]
% Plot[{
%  1 + k[2 r] - 2 k[r]^2,
%  1 + k[2 r]/2 - k[r]^2},
% {r, 0, 3},
% AxesLabel -> {
%   RawBoxes[
%    RowBox[{"||", "\[CapitalDelta]", "||", 
%      RowBox[{"/", "\[Sigma]"}]}]],
%   HoldForm[D Var[s[\[CapitalDelta]]]]},
% LabelStyle -> {GrayLevel[0]}]
  \includegraphics[width=.45\textwidth]{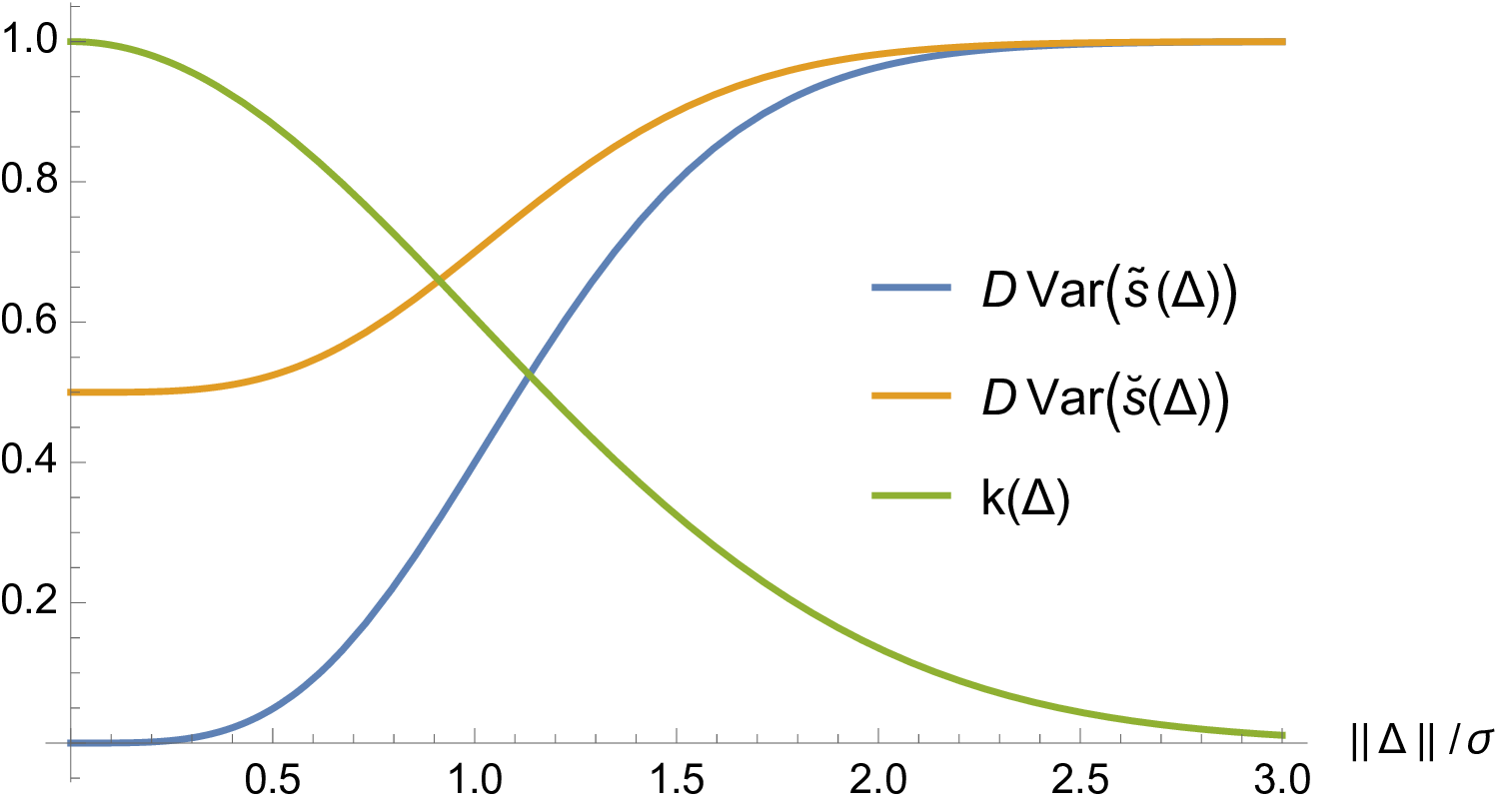}
  \caption{
    The variance per dimension
    of  $\tilde{s}$ (blue)
    and $\breve{s}$ (orange)
    for the Gaussian \acro{rbf} kernel (green).
  }
  \label{fig:rbf-var}
\end{figure}

% In fact, all integer moments of $C := \cos(\omega\tp\Delta)$ are:
% \begin{align}
%     \E[ C^{2 q + 1} ]
%    &= 2^{-2q} \sum_{m=0}^{q} \binom{2q+1}{m} k\left( (2 q + 1 - 2 m) \Delta \right)
% \\
%     \E[ C^{2 q} ]
%    &= 2^{-2q} \binom{2q}{q} + 2^{1-2q} \sum_{m=0}^{q-1} \binom{2q}{m} k\left( 2 (q - m) \Delta \right)
% .\end{align}

\subsection{UNIFORM ERROR BOUND} \label{sec:error:uni}

Let $f(x, y) := s(x, y) - k(x, y)$ denote the error of the approximation.
We will investigate $\norm{f}_\infty$,
i.e.\ the maximal approximation error across the domain of $k$.
We first consider the bound given by \textcite{rks},
and then provide a new bound on
$\E \norm{f}_\infty$
and its concentration around that mean.

\subsubsection{Original High-Probability Bound} \label{sec:error:uni:orig}

Claim 1 of \textcite{rks} is that if $\X \subset \R^d$ is compact with diameter $\ell$,\footnote{Note that our $D$ is half of the $D$ in \textcite{rks}, since we want to compare embeddings of the same dimension.}
\[
    \Pr\left( \Norm{f}_\infty \ge \varepsilon \right)
    \le 256 \left( \frac{\sigma_p \ell}{\varepsilon} \right)^2
        \exp\left( - \frac{D \varepsilon^2}{8 (d + 2)} \right)
,\]
where $\sigma_p^2 = \E\left[ \omega\tp\omega \right] = \tr \nabla^2 k(0)$
depends on the kernel.

It is not necessarily clear in that paper that this bound applies only to the $\tilde z$ embedding;
we can also tighten some constants.
We first state the tightened bound for $\tilde z$.

\begin{proposition} \label{thm:uni:tilde}
Let $k$ be a continuous shift-invariant positive-definite function
$k(x, y) = k(\Delta)$ defined on $\X \subset \R^d$,
with $k(0) = 1$ and such that $\nabla^2 k(0)$ exists.
Suppose $\X$ is compact, with diameter $\ell$.
Denote $k$'s Fourier transform as $P(\omega)$, which will be a probability distribution;
let $\sigma_p^2 = \E_p \Norm\omega^2$.
Let $\tilde z$ be as in \cref{eq:z-tilde},
and define $\tilde f(x, y) := \tilde z(x)\tp \tilde z(y) - k(x, y)$.
For any $\varepsilon > 0$, let
\begin{align}
  \alpha_\varepsilon
  &:= \min\left(1, \sup_{x, y \in \X} \frac12 + \frac12 k(2x, 2y) - k(x, y)^2 + \tfrac13 \varepsilon \right),
\\
  \beta_d
  &:= \left( \left( \tfrac{d}{2} \right)^{\frac{-d}{d+2}} + \left( \tfrac{d}{2} \right)^{\frac{2}{d+2}} \right) 2^\frac{6d+2}{d+2}
.\end{align}
Then, assuming only for the second statement that $\varepsilon \le \sigma_p \ell$,
\begin{align}
    \Pr\left( \norm{\tilde f}_\infty \ge \varepsilon \right)
   &\le
    \beta_d 
    \left( \frac{\sigma_p \ell}{\varepsilon} \right)^\frac{2}{1 + \frac{2}{d}}
    \exp\left( - \frac{D \varepsilon^2}{8 (d + 2) \alpha_\varepsilon} \right)
\\ &\le
    66
    \left( \frac{\sigma_p \ell}{\varepsilon} \right)^2
    \exp\left( - \frac{D \varepsilon^2}{8 (d + 2)} \right)
.\end{align}
Thus, we can achieve an embedding with pointwise error no more than $\varepsilon$
with probability at least $1 - \delta$ as long as
\[
    D
    \ge
    \frac{8 (d + 2) \alpha_\varepsilon}{\varepsilon^2}
    \left[
      \frac{2}{1 + \frac{2}{d}}
      \log \frac{\sigma_p \ell}{\varepsilon}
      + \log \frac{\beta_d}{\delta}
    \right]
    % \frac{8 (d + 2)}{\varepsilon^2}
    % \log\left(
    %     \frac{\beta_d}{1 - \delta}
    %     \left( \frac{\sigma_p \ell}{\varepsilon} \right)^2
    % \right)
.\]
\end{proposition}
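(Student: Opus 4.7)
The plan is to follow the $\varepsilon$-net argument of \textcite{rks}, with two refinements: replace Hoeffding's inequality by Bernstein's at each net point (yielding the $\alpha_\varepsilon$ factor), and optimize the net spacing more carefully (yielding the improved exponent $\tfrac{2}{1+2/d} = \tfrac{2d}{d+2}$ on $\sigma_p\ell/\varepsilon$).

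First, since $\tilde f$ depends only on $\Delta = x - y$, and $\Delta$ lies in a subset of the ball of radius $\ell$ about $0$ in $\R^d$ (using $0 \in \X - \X$ and $\Norm{x - y} \le \ell$), I would cover this set with an $r$-net $T$ of cardinality at most $(c\ell/r)^d$ for an explicit constant $c$, by the usual volumetric packing argument.

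Second, at each net point $\Delta_i$, write $\tilde f(\Delta_i) = \tfrac{1}{D/2}\sum_{j=1}^{D/2} X_{ij}$ with $X_{ij} := \cos(\omega_j\tp \Delta_i) - k(\Delta_i)$ centered, $\abs{X_{ij}} \le 2$, and variance equal to the expression in~\cref{eq:var-cos-wd}. By construction of $\alpha_\varepsilon$, this variance plus $\varepsilon/3$ is bounded by $\alpha_\varepsilon$ whenever the supremum variance lies below $1$; otherwise Hoeffding with range $2$ delivers the same bound with $\alpha_\varepsilon = 1$. Either way, Bernstein (resp.\ Hoeffding) at level $\varepsilon/2$ gives
\[
\Pr\bigl(\abs{\tilde f(\Delta_i)} \ge \varepsilon/2\bigr) \le 2 \exp\!\left(-\tfrac{D\varepsilon^2}{16\,\alpha_\varepsilon}\right),
\]
and a union bound over $T$ controls $\max_i \abs{\tilde f(\Delta_i)}$.

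Third, control $\tilde f$ between net points via its Lipschitz constant $L_{\tilde f}$. Using $\nabla\tilde s(\Delta) = -\tfrac{2}{D}\sum_j \sin(\omega_j\tp\Delta)\omega_j$ and $\nabla k(\Delta) = -\E_\omega[\sin(\omega\tp\Delta)\omega]$, a routine second-moment calculation yields $\E[L_{\tilde f}^2] \le C\sigma_p^2$, so Markov gives $\Pr(L_{\tilde f} > \lambda) \le C\sigma_p^2/\lambda^2$. Taking $\lambda = \varepsilon/(2r)$ and combining,
\[
\Pr\bigl(\Norm{\tilde f}_\infty \ge \varepsilon\bigr)
\le 2(c\ell/r)^d \exp\!\left(-\tfrac{D\varepsilon^2}{16\alpha_\varepsilon}\right) + \tfrac{C'\sigma_p^2 r^2}{\varepsilon^2}.
\]

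Finally, minimize the right-hand side over $r$. Writing it as $A r^{-d} + B r^2$, calculus gives $r^* = (dA/(2B))^{1/(d+2)}$ with value $A^{2/(d+2)} B^{d/(d+2)} \cdot \bigl[(d/2)^{-d/(d+2)} + (d/2)^{2/(d+2)}\bigr]$. Substituting $A \propto \ell^d \exp(-D\varepsilon^2/(16\alpha_\varepsilon))$ and $B \propto \sigma_p^2/\varepsilon^2$ produces the factor $(\sigma_p\ell/\varepsilon)^{2d/(d+2)}$, turns the exponent into $-D\varepsilon^2/(8(d+2)\alpha_\varepsilon)$ (since $16 \cdot (d+2)/2 = 8(d+2)$), and assembles the remaining numerical constants into $\beta_d$. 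The second form of the bound follows by verifying $\beta_d \le 66$ (one checks that $\beta_d \to 64$ as $d \to \infty$ and attains its maximum at moderate $d$) together with $\alpha_\varepsilon \le 1$ and $(\sigma_p\ell/\varepsilon)^{2d/(d+2)} \le (\sigma_p\ell/\varepsilon)^2$, the latter valid exactly when $\sigma_p\ell \ge \varepsilon$. The sample-complexity inequality is then an algebraic rearrangement.

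The main obstacle I anticipate is the bookkeeping: tracking the covering constant $c$, the Markov constant $C$, and the numerical prefactors through the $r$-optimization so they assemble into exactly the stated $\beta_d$, and verifying $\beta_d \le 66$ uniformly in $d$. The conceptual ingredients---Bernstein in place of Hoeffding, plus a one-parameter balancing---are routine once the covering framework is in place.
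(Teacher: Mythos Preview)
Your proposal is correct and follows essentially the same route as the paper's proof: an $\varepsilon$-net on $\X_\Delta$ with Bernstein (falling back to Hoeffding) at the anchors, a Markov bound on the Lipschitz constant via $\E[L_{\tilde f}^2] \le \sigma_p^2$, and exact minimization of $A r^{-d} + B r^2$ to produce $\beta_d$ and the exponent $2d/(d+2)$. The only ingredient you leave implicit that the paper spells out is the regularity step $\E\nabla\tilde s = \nabla\E\tilde s = \nabla k$ (needed to justify the second-moment bound on $L_{\tilde f}$), which follows from a dominated-convergence/Leibniz argument using the assumed finiteness of $\E\norm{\omega}$.
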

The proof strategy is very similar to that of \textcite{rks}:
place an $\varepsilon$-net with radius $r$ over $\X_\Delta := \{ x - y : x, y \in \X \}$,
bound the error $\tilde f$ by $\varepsilon / 2$ at the centers of the net by Hoeffding's inequality \parencite*{hoeffding},
and bound the Lipschitz constant of $\tilde f$, which is at most that of $\tilde s$, by $\varepsilon / (2 r)$ with Markov's inequality.
The introduction of $\alpha_\varepsilon$ is by replacing Hoeffding's inequality
with that of \textcite{bernstein} when it is tighter,
using the variance from \cref{eq:var-s-tilde}.
The constant $\beta_d$ is obtained by exactly optimizing the value of $r$,
rather than the algebraically simpler value originally used;
$\beta_{64} = 66$ is its maximum, and $\lim_{d \to \infty} \beta_d = 64$,
though it is lower for small $d$,
as shown in \cref{fig:uni-constants}.
The additional hypothesis, that $\nabla^2 k(0)$ exists,
is equivalent to the existence of the first two moments of $P(\omega)$;
a finite first moment is used in the proof,
and of course without a finite second moment the bound is vacuous.
The full proof is given in
\ifbool{shortversion}{%
  Appendix A.1 (in the supplemental material).
}{%
  \cref{proof:uni:tilde}.
}

\begin{figure}[bt]
  \centering
% consttilde := ((d/2)^(-d/(d + 2)) + (d/2)^(2/(d + 2))) 2^((6 d + 2)/(d + 2))
% constbreve := (d^(-d/(d + 1)) + d^(1/(d + 1))) 2^((5 d + 1)/(d + 1)) 3^(d/(d + 1))
% LogLinearPlot[{consttilde, constbreve}, {d, 1, 10^5}, AxesLabel -> {HoldForm[d], None}, LabelStyle -> {16, GrayLevel[0]}, PlotRange -> {0, Automatic}]
  \includegraphics[width=.45\textwidth]{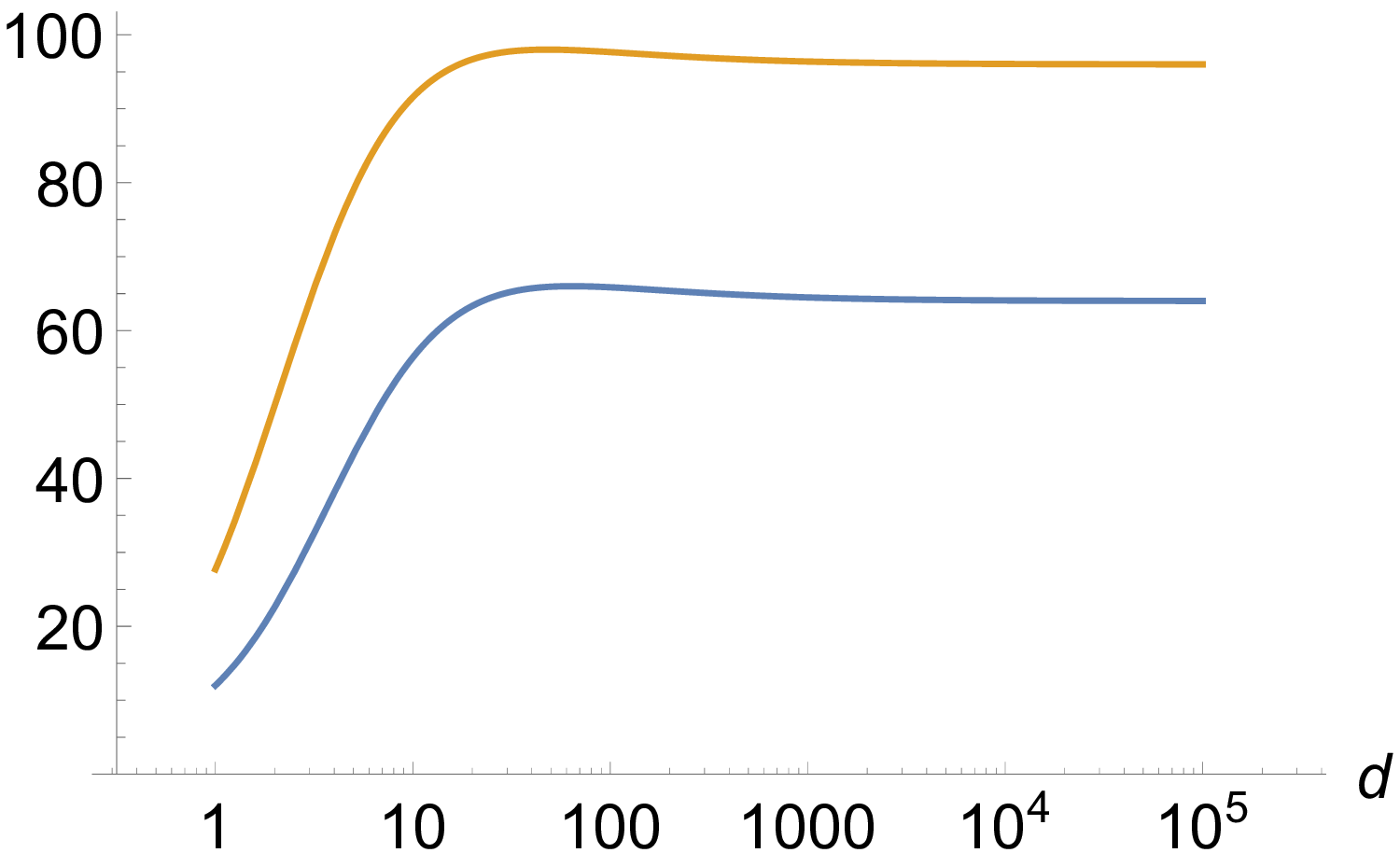}
  \caption{
    The coefficient $\beta_d$ of \cref{thm:uni:tilde} (blue, for $\tilde z$)
    and $\beta'_d$ of \cref{thm:uni:breve} (orange, for $\breve z$).
    \Textcite{rks} used a constant of 256 for $\tilde z$.
  }
  \label{fig:uni-constants}
\end{figure}

For the Gaussian kernel,
$\alpha_\varepsilon \le \tfrac12 + \frac13 \varepsilon$
and $\sigma_p^2 = d / \sigma^2$;
the Bernstein bound is tighter when $\varepsilon < \frac32$.

For $\breve z$, since the embedding $\breve s$ is not shift-invariant,
we must instead place the $\varepsilon$-net on $\X^2$.
The additional noise in $\breve s$ also increases the expected Lipschitz constant
and gives looser bounds on each term in the sum,
though there are twice as many such terms.
The corresponding bound is as follows:
\begin{proposition} \label{thm:uni:breve}
Let $k$, $\X$, $\ell$, $P(\omega)$, and $\sigma_p$ be as in \cref{thm:uni:tilde}.
Define $\breve z$ by \cref{eq:z-breve},
and $\breve f(x, y) := \breve z(x)\tp \breve z(y) - k(x, y)$.
For any $\varepsilon > 0$, define
\begin{align}
\alpha'_\varepsilon &:= \min\left(1, \sup_{x, y \in \X} \tfrac14 + \tfrac18 k(2x, 2y) - \tfrac14 k(x, y)^2 + \tfrac16 \varepsilon \right),
\\
  \beta'_d &:= \left( d^\frac{-d}{d+1} + d^\frac{1}{d+1} \right) 2^\frac{5d+1}{d+1} 3^\frac{d}{d+1}
.\end{align}
Then, assuming only for the second statement that $\varepsilon \le \sigma_p \ell$,
\begin{align}
    \Pr\left( \norm{\breve f}_\infty \ge \varepsilon \right)
   &\le
    \beta'_d
    \left( \frac{\sigma_p \ell}{\varepsilon} \right)^\frac{2}{1 + \frac{1}{d}}
    \exp\left( - \frac{D \varepsilon^2}{32 (d + 1) \alpha'_\varepsilon} \right)
\\ &\le
    98
    \left( \frac{\sigma_p \ell}{\varepsilon} \right)^2
    \exp\left( - \frac{D \varepsilon^2}{32 (d + 1)} \right)
.\end{align}
Thus, we can achieve an embedding with pointwise error no more than $\varepsilon$
with probability at least $1 - \delta$ as long as
\[
    D
    \ge
    \frac{32 (d + 1) \alpha_\varepsilon'}{\varepsilon^2}
    \left[
      \frac{2}{1 + \frac{1}{d}}
      \log \frac{\sigma_p \ell}{\varepsilon}
      + \log \frac{\beta_d'}{\delta}
    \right]
.\]
\end{proposition}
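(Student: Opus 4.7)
The proof adapts the template of \cref{thm:uni:tilde} with three modifications to handle the additional noise in $\breve{z}$. First, because $\breve{s}$ depends on $x$ and $y$ separately rather than only through $\Delta$, I would place an $r$-net on $\X\times\X\subset\R^{2d}$ rather than on $\X_\Delta\subset\R^d$; the covering number scales as $(C\ell/r)^{2d}$, and after the eventual optimization of $r$ this produces the exponent $\tfrac{2}{1+1/d}=\tfrac{2d}{d+1}$ in place of the $\tfrac{2d}{d+2}$ obtained for $\tilde{z}$. Second, each summand of $\breve{s}$ is a sum of two cosines and thus lies in $[-2,2]$ rather than $[-1,1]$, doubling the Hoeffding range; since the averaging is over $D$ rather than $D/2$ samples, Hoeffding alone gives $\Pr(\abs{\breve f(x,y)}\ge\varepsilon/2)\le 2\exp(-D\varepsilon^2/32)$, and using Bernstein together with the variance formula \cref{eq:var-s-breve} sharpens this to $2\exp\!\bigl(-D\varepsilon^2/(32\alpha'_\varepsilon)\bigr)$, where $\alpha'_\varepsilon$ collects the per-term variance and Bernstein slack.

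The third modification is the Lipschitz bound, which is where the main work lies. Differentiating $\breve{s}$ in the $2d$-dimensional variable $(x,y)$ gives
\[
    \nabla_x \breve{s} = -\frac{1}{D}\sum_i \omega_i\bigl[\sin(\omega_i\tp\Delta) + \sin(\omega_i\tp(x+y) + 2b_i)\bigr]
,\]
with an analogous expression for $\nabla_y\breve{s}$ differing only in the sign of the first term. Since $\E\nabla_{(x,y)}\breve{f}=0$ and the summands are independent across $i$, the expected squared gradient norm of $\breve{f}$ reduces to a sum of per-sample variances. Marginalizing over $b_i$ via $\E_b\sin(\cdot+2b)=0$ and $\E_b\sin^2(\cdot+2b)=\tfrac12$ decouples the shift-invariant and noise contributions, yielding a bound of the form $\E\Norm{\nabla_{(x,y)}\breve{f}}^2\le C\,\sigma_p^2/D$ for an explicit small constant $C$. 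A Markov bound then forces the Lipschitz constant of $\breve{f}$ to be at most $\varepsilon/(2r)$ except on a small probability event.

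Combining the net-center failure event (union bound over the $(C\ell/r)^{2d}$ covering balls) with the Lipschitz failure event, and optimizing the radius $r$ exactly to balance the two terms, yields the stated exponent $2/(1+1/d)$ and constant $\beta'_d$; the weaker second bound follows by using $\alpha'_\varepsilon\le 1$ and bounding $\beta'_d$ uniformly in $d$ by its supremum, which approaches $96$ as $d\to\infty$ and is under $98$ throughout. The main obstacle is the Lipschitz calculation: keeping the constants tight requires carefully exploiting both the independence of $(\omega_i,b_i)$ across samples and the $b$-averaging identities, so as to absorb the contribution of the $\cos(\omega\tp(x+y)+2b)$ noise terms into a clean multiple of $\sigma_p^2$ rather than letting the non-shift-invariant part inflate the gradient estimate relative to the $\tilde{z}$ case.
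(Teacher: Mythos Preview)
Your overall architecture matches the paper's: net on $\X^2$ rather than $\X_\Delta$, Hoeffding/Bernstein at the anchors with range $[-2,2]$ over $D$ samples, Markov on the Lipschitz constant, then optimize $r$. The anchor-point and covering steps are fine.

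The gap is in the Lipschitz step. You argue that because $\E\nabla_{(x,y)}\breve f = 0$ and the summands are independent across $i$, the expected squared gradient norm reduces to a sum of per-sample variances, giving $\E\norm{\nabla_{(x,y)}\breve f}^2 \le C\sigma_p^2/D$. That computation is correct at any \emph{fixed} $(x,y)$, but it does not control $\E[L_{\breve f}^2] = \E\bigl[\sup_{q\in\X^2}\norm{\nabla\breve f(q)}^2\bigr]$, which is what Markov requires. The maximizer $q^*$ is a function of all the $(\omega_i,b_i)$, so when the gradient is evaluated at $q^*$ the summands are no longer independent and the variance-reduction step collapses. If you carry the spurious $1/D$ through, $\kappa_2$ picks up a $1/D$ factor, the optimized bound acquires an extra $D^{-d/(d+1)}$, and you cannot recover the stated $\beta'_d$ or the denominator $32(d+1)\alpha'_\varepsilon$.

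The paper instead bounds $\E[L_{\breve f}^2]$ by $\E\norm{\nabla\breve s(q^*)}^2$ and then applies Jensen \emph{pointwise}: $\bigl\lVert\tfrac{1}{D}\sum_i a_i(q)\bigr\rVert^2 \le \tfrac{1}{D}\sum_i\norm{a_i(q)}^2$ with $a_i(q) = \nabla_q\bigl(2\cos(\omega_i\tp x + b_i)\cos(\omega_i\tp y + b_i)\bigr)$. Because this inequality holds for every $q$ simultaneously, it survives the supremum; taking expectation afterwards and averaging over $b$ (using $\E_b\cos(\cdot + 4b)=0$) gives $\E[L_{\breve f}^2]\le 3\sigma_p^2$ with no $1/D$. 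Markov then yields $\Pr(L_{\breve f}\ge\varepsilon/(2r))\le 12(\sigma_p r/\varepsilon)^2$, and it is exactly this constant $12$ together with the covering constant $(2\sqrt{2})^{2d}$ that, after optimizing $r=(d\kappa_1/\kappa_2)^{1/(2d+2)}$, produces the $3^{d/(d+1)}$ and $2^{(5d+1)/(d+1)}$ factors in $\beta'_d$.
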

$\beta'_{48} = 98$, and $\lim_{d \to \infty} \beta'_d = 96$,
also shown in \cref{fig:uni-constants}.
The full proof is given in
\ifbool{shortversion}{%
  Appendix A.2 (in the supplemental material).
}{%
  \cref{proof:uni:breve}.
}

For the Gaussian kernel,
$\alpha'_\varepsilon \le \tfrac14 + \tfrac16 \varepsilon$,
so that the Berstein bound is essentially always superior.

\subsubsection{Expected Max Error} \label{sec:error:uni:exp}

Noting that 
$\E \norm{f}_\infty = \int_0^\infty \Pr\left( \norm{f}_\infty \ge \varepsilon \right) \,\ud\varepsilon$,
one could consider bounding $\E \norm{f}_\infty$
via \cref{thm:uni:tilde,thm:uni:breve}.
Unfortunately, that integral diverges on $(0, \gamma)$ for any $\gamma > 0$.
If we instead integrate the minimum of that bound and 1,
the result depends on a solution to a transcendental equation,
so analytical manipulation is difficult.

We can, however, use
a slight generalization of Dudley's entropy integral \parencite*{dudley}
to obtain the following bound:
\begin{proposition} \label{thm:exp-sup-tilde}
Let $k$, $\X$, $\ell$, and $P(\omega)$ be as in \cref{thm:uni:tilde}.
Define $\tilde z$ by \eqref{eq:z-tilde},
and $\tilde f(x, y) := \tilde z(x)\tp \tilde{z}(y) - k(x, y)$.
Let $\X_\Delta := \{ x - y \mid x, y \in \X \}$;
suppose $k$ is $L$-Lipschitz on $\X_\Delta$.
Let
$
    R := \E \max_{i = 1, \dots, \frac{D}{2}} \norm{\omega_i}
$.
Then
\[
    \E\left[ \norm{\tilde f}_\infty \right]
  \le \frac{24 \gamma \sqrt{d} \ell}{\sqrt{D}} (R + L)
\]
where $\gamma \approx 0.964$.
\end{proposition}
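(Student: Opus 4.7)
The plan is to bound $\E\norm{\tilde f}_\infty = \E\sup_{\Delta\in\X_\Delta}\abs{\tilde f(\Delta)}$ by a chaining/Dudley argument in which the subgaussian parameter for increments is allowed to be a random function of the draws $(\omega_i)$. Since $\X$ has diameter $\ell$, the difference set $\X_\Delta := \{x-y:x,y\in\X\}$ has Euclidean diameter at most $2\ell$, and a standard volumetric packing argument gives $N(\X_\Delta,\norm{\cdot},\epsilon) \le (c\ell/\epsilon)^d$ for an absolute constant $c$.

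The key increment estimate is as follows. Writing
\[
  \tilde f(u) - \tilde f(v) = \frac{2}{D}\sum_{i=1}^{D/2} W_i,
  \qquad
  W_i := \cos(\omega_i\tp u) - \cos(\omega_i\tp v) - k(u) + k(v),
\]
each $W_i$ has mean zero, is independent across $i$, and satisfies $\abs{W_i}\le(\norm{\omega_i}+L)\norm{u-v}$ by the Lipschitz constants of cosine and of $k$. Bounding the right-hand side uniformly in $i$ by $(\max_j\norm{\omega_j}+L)\norm{u-v}$ and applying Hoeffding's inequality conditional on $(\omega_i)$ then gives a subgaussian tail for $\tilde f(u)-\tilde f(v)$ with parameter proportional to $(\max_j\norm{\omega_j}+L)\norm{u-v}/\sqrt{D}$.

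With this conditional subgaussian metric in hand, I would apply the standard chaining argument of Dudley over nested $\epsilon_n$-nets at geometric scales $\epsilon_n = 2^{-n}\ell$, controlling the max over links at each level via the standard max-of-subgaussians inequality and telescoping. This yields the conditional bound
\[
  \E\bigl[\sup_{\Delta\in\X_\Delta} \abs{\tilde f(\Delta) - \tilde f(\Delta_0)} \bigm| \omega\bigr] \;\le\; \frac{c_1(\max_j\norm{\omega_j}+L)}{\sqrt D}\int_0^{2\ell}\!\sqrt{\log N(\X_\Delta,\norm{\cdot},\epsilon)}\,\ud\epsilon.
\]
The \emph{slight generalization} of Dudley is precisely that the subgaussian parameter here is random, but because it enters the conditional bound linearly, taking $\E_\omega$ simply replaces $\max_j\norm{\omega_j}$ by $R = \E\max_j\norm{\omega_j}$, producing the $R+L$ factor. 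Substituting the covering bound and evaluating $\int_0^{2\ell}\sqrt{d\log(c\ell/\epsilon)}\,\ud\epsilon = 2\sqrt d\,\ell\,\gamma$, with $\gamma$ of the form $\int_0^1\sqrt{\log(c'/u)}\,\ud u \approx 0.964$ for the appropriate covering constant, yields the leading term $24\gamma\sqrt d\,\ell\,(R+L)/\sqrt D$; the anchor term $\E\abs{\tilde f(\Delta_0)} = O(1/\sqrt D)$ follows from \cref{eq:var-s-tilde} and is absorbable.

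The main obstacle is matching the constant exactly: getting $24\gamma$ rather than a larger multiple requires careful optimization of the chaining geometry and the choice of covering constant. A subtle conceptual point is that it is essential to use the crude bound $\abs{W_i}\le(\max_j\norm{\omega_j}+L)\norm{u-v}$ within Hoeffding rather than the apparently tighter bound $\sqrt{\sum_j(\norm{\omega_j}+L)^2}\norm{u-v}$: the latter, when passed through Jensen's inequality, yields $\sigma_p$ instead of $R$, and similarly a Rademacher symmetrization approach to Dudley would yield a bound involving $\sigma_p$ rather than $R$. The direct conditional-on-$\omega$ Hoeffding route is what produces the claimed form.
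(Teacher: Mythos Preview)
Your overall strategy matches the paper's: Dudley chaining over $\X_\Delta$, with the key wrinkle that the natural subgaussian parameter for the increments involves the random quantity $\max_i\norm{\omega_i}$. You also correctly observe that a symmetrization route would produce $\sigma_p$ rather than $R$.

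The gap is in how you handle the random parameter. You propose to apply Hoeffding ``conditional on $(\omega_i)$'' and then chain to bound $\E\bigl[\sup_\Delta\abs{\tilde f(\Delta)-\tilde f(\Delta_0)}\bigm|\omega\bigr]$. But conditionally on the full tuple $\omega=(\omega_1,\dots,\omega_{D/2})$, the process $\tilde f$ is \emph{deterministic}---$\omega$ is the only source of randomness in $\tilde f$---so there is nothing for Hoeffding to act on, and the conditional expectation of the supremum is just the supremum itself. The step is vacuous as written. (This is precisely what distinguishes your situation from the symmetrization route you mention: there, conditioning on $\omega$ still leaves the Rademacher signs random.)

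The paper's fix is to condition only on the scalar event $\{\max_i\norm{\omega_i}=r\}$. Under this event the $\omega_i$ remain random (each constrained to the ball of radius $r$), every summand in $\tilde f(\Delta)-\tilde f(\Delta')$ is bounded by $\tfrac{2}{D}(r+L)\norm{\Delta-\Delta'}$ with a \emph{deterministic} constant, Hoeffding's lemma applies term by term, and Dudley yields a bound on $\E\bigl[\sup_\Delta\tilde f(\Delta)\bigm|\max_i\norm{\omega_i}=r\bigr]$ that is linear in $(r+L)$. Integrating over the law of $r$ then turns $r$ into $R$.

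Two smaller points. First, the anchor term vanishes exactly: take $\Delta_0=0\in\X_\Delta$, where $\tilde f(0)=\tilde s(0)-k(0)=1-1=0$, so there is nothing to absorb. Second, the constant $24$ and the value of $\gamma$ come out differently from what you sketch: the paper bounds $\E\sup\tilde f$ and $\E\sup(-\tilde f)$ separately by $12\gamma\sqrt d\,\ell(R+L)/\sqrt D$ and uses $\norm{\tilde f}_\infty\le\sup\tilde f+\sup(-\tilde f)$ (both nonnegative since $\tilde f(0)=0$), and the entropy integral is $\int_0^{\ell/2}\sqrt{d\log(8\ell/u)}\,\ud u=\gamma\ell\sqrt d$ with upper limit $\ell/2$ (half the radius of $\X_\Delta$ from the origin), not $2\ell$.
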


The proof is given in 
\ifbool{shortversion}{Appendix A.3}{\cref{proof:exp:tilde}}.
In order to apply the method of \textcite{dudley},
we must work around $\norm{\omega_i}$
(which appears in the covariance of the error process)
being potentially unbounded.
To do so, we bound a process with truncated $\norm{\omega_i}$,
and then relate that bound to $\tilde f$.

For the Gaussian kernel,
$L = 1 / (\sigma \sqrt{e})$
and\footnote{%
    By the Gaussian concentration inequality \parencite[Theorem~5.6]{boucheron},
    each $\norm{\omega} - \E \norm \omega$ is sub-Gaussian with variance factor $\sigma^{-2}$;
    the claim follows from their Section~2.5.
}
\begin{align}
       R
  &\le \left( \sqrt{2} \, \frac{\Gamma\left( (d+1) / 2 \right)}{\Gamma\left( d/2 \right)}
     + \sqrt{2 \log\left( D/2 \right)} \right) / \sigma
\\&\le \left( \sqrt{d}
     + \sqrt{2 \log\left( D/2 \right)} \right) / \sigma
.\end{align}
Thus
$\E \norm{\tilde f}_\infty$ is less than
\begin{gather}
    \frac{24 \gamma \sqrt{d} \, \ell}{\sqrt{D} \, \sigma} \left( e^{-1/2} + \sqrt{d} + \sqrt{2 \log(D/2)} \right)
\label{eq:exp-bound:gaussian}
.\end{gather}

We can also prove an analogous bound for the $\tilde z$ features:
\begin{proposition} \label{thm:exp-sup-breve}
  Let $k, \X, \ell$, and $P(\omega)$ be as in \cref{thm:uni:tilde}.
  Define $\breve z$ by \cref{eq:z-breve},
  and $\breve f(x, y) := \breve z(x)\tp \breve z(y) - k(x, y)$.
  Suppose $k(\Delta)$ is $L$-Lipschitz.
  Let $R := \E \max_{i=1, \dots, D} \norm{\omega_i}$.
  Then, for $\X$ and $D$ not extremely small,
  \[
    \E\left[ \norm{\breve f}_\infty \right]
    \le 
    \frac{48 \gamma'_{\X} \ell \sqrt{d}}{\sqrt{D}} (R + L)
  \]
  where $0.803 < \gamma'_\X < 1.542$.
  See
  \ifbool{shortversion}{Appendix A.4}{\cref{proof:exp:breve}}
  for details on $\gamma'_\X$ and the ``not extremely small'' assumption.
\end{proposition}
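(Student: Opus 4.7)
The strategy mirrors the proof of \cref{thm:exp-sup-tilde}: view $\breve{f}$ as a stochastic process indexed over its domain, bound its supremum via a generalized Dudley entropy integral after truncating the frequencies $\omega_i$, and then control the truncation error separately. The crucial structural difference is that $\breve{s}$ is not shift-invariant, so the process now lives on $\X \times \X$ rather than on $\X_\Delta$; this $2d$-dimensional domain (of diameter $\ell\sqrt{2}$) is what inflates the leading constant from $24$ to roughly $48$.

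First, I would write $\breve{f}(x,y) - \breve{f}(x',y')$ as a sum of cosine differences at $\omega_i\tp(x-y)$ and at $\omega_i\tp(x+y) + 2b_i$, minus the corresponding $k$-differences, and use $\lvert \cos u - \cos v \rvert \le \lvert u - v \rvert$ together with $\lvert \omega_i\tp u - \omega_i\tp u'\rvert \le \norm{\omega_i}\norm{u-u'}$. Since both $\norm{(x-y)-(x'-y')}$ and $\norm{(x+y)-(x'+y')}$ are at most $\sqrt{2}\,\norm{(x,y)-(x',y')}$, one obtains that conditional on the frequencies the process is sub-Gaussian on $\X \times \X$ with respect to the Euclidean metric, with a sub-Gaussian constant proportional to $(\max_i \norm{\omega_i})/\sqrt{D}$; the factor of two relative to $\tilde f$ comes from having both a shift-invariant and a noise cosine per term.

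Second, I would truncate: replace $\omega_i$ by $\omega_i\,\mathbf{1}\{\norm{\omega_i}\le\tau\}$ to obtain a process $\breve{f}_\tau$ with bounded sub-Gaussian constant, and apply Dudley's entropy integral to $\breve{f}_\tau$ over $\X\times\X$ using $\log N(\X\times\X,\varepsilon) \lesssim 2d\log(\ell/\varepsilon)$. This yields a bound of the form $\mathrm{const}\cdot\ell\tau\sqrt{d/D}$, where the constant depends on the shape of $\X\times\X$ through $\int_0^{\ell\sqrt{2}} \sqrt{\log N(\X\times\X,\varepsilon)}\,\ud\varepsilon$. This integral cannot be matched cleanly to the polynomial covering-number bound when $\ell$ or $D$ is too small relative to $\sqrt{d}$ (because the upper limit sits inside the region where the covering estimate is loose), and this is precisely the origin of the ``not extremely small'' hypothesis and of the gap $0.803 < \gamma'_\X < 1.542$.

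Third, I would bound the truncation error $\E\norm{\breve f - \breve f_\tau}_\infty$ using the Lipschitz constant of $\breve s - k$ on the removed frequencies, in terms of $\E\bigl[\max_i \norm{\omega_i}\,\mathbf{1}\{\norm{\omega_i}>\tau\}\bigr] + L$, and choose $\tau$ to balance the Dudley term against the truncation term so that the combined bound involves only $R + L$, matching the $\tilde z$ result. The main obstacle will be keeping the numerical constants sharp while absorbing the extra factor of two from doubled dimensionality, the extra factor from the additional noise cosine, and the $\sqrt{2}$ rescaling of increments---all without losing a further logarithmic factor in $D$ that would break the parallel with \cref{thm:exp-sup-tilde}.
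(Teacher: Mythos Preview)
Your overall plan---Dudley's entropy integral on $\X^2$, with the non-shift-invariance forcing the doubled dimension---matches the paper. But you have misidentified where the two caveats come from, and your handling of the unbounded $\norm{\omega_i}$ is not the paper's and runs into exactly the obstacle you flag at the end.

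The paper does not truncate-and-optimize. It \emph{conditions} on $\max_i \norm{\omega_i} = r$, obtains a Dudley bound linear in $r+L$, and then takes expectation over the law of the maximum to produce $R$ directly:
\[
  \E\sup\breve f = \E_r\bigl[\E[\sup\breve g_r]\bigr] \le c\,\E_r[r+L]/\sqrt{D} = c(R+L)/\sqrt{D}.
\]
Your decomposition $\breve f = \breve f_\tau + (\breve f - \breve f_\tau)$ with a deterministic $\tau$ cannot be balanced: the truncation error $\E\norm{\breve f - \breve f_\tau}_\infty$ carries no $1/\sqrt{D}$ scaling (it is governed by $\Pr(\max_i\norm{\omega_i}>\tau)$, which does not decay in $D$), so no fixed $\tau$ gives $(R+L)/\sqrt{D}$ without the extra $\log D$ you worry about. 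Note also that the $k$-part cancels in $\breve f - \breve f_\tau$, so there is no $L$ contribution there. The conditioning trick sidesteps all of this, and it is also what the paper does for \cref{thm:exp-sup-tilde}---despite the informal word ``truncated'' in the main text.

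You have also misattributed both caveats. The range $0.803 < \gamma'_\X < 1.542$ has nothing to do with $D$ or with the covering estimate being loose. Dudley's bound is on $\E[\sup_q X_q - X_{q_0}]$, with the entropy integral running up to $\sup_q d(q,q_0)$. Unlike $\X_\Delta$ (where $\Delta_0=0$ is canonical), $\X^2$ has no distinguished base point; the paper takes $q_0=(x_0,x_0)$ and sets $\rho := \sup_{x\in\X}\norm{x-x_0}$, and since $\tfrac12\ell \le \rho \le \ell$ the resulting constant $\gamma'_{\ell/\rho}$ ranges over $[0.803,1.542]$ purely as a function of the shape of $\X$. The ``not extremely small'' hypothesis is separate and also unrelated to the entropy integral: it enters when passing from $\E\sup\breve f$ to $\E\norm{\breve f}_\infty$. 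For $\tilde f$ one has $\tilde f(0)=0$ deterministically, so both $\sup\tilde f\ge 0$ and $\sup(-\tilde f)\ge 0$, and $\norm{\tilde f}_\infty \le \sup\tilde f + \sup(-\tilde f)$. For $\breve f$ there is no guaranteed zero; the paper conditions on whether $\breve f$ crosses zero and absorbs the complementary probability---nonnegligible only for very small $\X$ or $D$---into the hypothesis.
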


The proof is given in
\ifbool{shortversion}{Appendix A.4}{\cref{proof:exp:breve}}.
It is similar to that for \cref{thm:exp-sup-tilde},
but the lack of shift invariance increases some constants
and otherwise slightly complicates matters.
Note also that the $R$ of \cref{thm:exp-sup-breve}
is somewhat larger than that of \cref{thm:exp-sup-tilde}.

\subsubsection{Concentration About Mean} \label{sec:error:uni:conc}

Bousquet's inequality \parencite*{bousquet2002}
can be used to show exponential concentration of $\sup f$ about its mean.

We consider $\tilde f$ first.
Let
\[
    \tilde f_{\omega}(\Delta) := \frac{2}{D} \left( \cos(\omega\tp\Delta) - k(\Delta) \right)
,\]
so $f(\Delta) = \sum_{i=1}^{D/2} \tilde f_{\omega_i}(\Delta)$.
Define the ``wimpy variance'' of $\tilde f/2$ (which we use so that $\abs{\tilde f/2} \le 1$) as
\begin{align}
    \sigma_{\tilde f / 2}^2
  :&= \sup_{\Delta \in \X_\Delta} \sum_{i=1}^{D/2} \Var\left[ \tfrac12 \tilde f_{\omega_i}(\Delta) \right]
\\&= \frac{1}{D} \sup_{\Delta \in \X_\Delta} \left[ 1 + k(2 \Delta) - 2 k(\Delta)^2 \right]
\\&=: \frac{1}{D} \sigma_w^2
,\end{align}
using \cref{eq:var-cos-wd}.
Clearly $1 \le \sigma_w^2 \le 2$;
for the Gaussian kernel, it is 1.

\begin{proposition} \label{thm:uni:conc}
Let $k$, $\X$, and $P(\omega)$ be as in \cref{thm:uni:tilde},
and $\tilde z$ be defined by \cref{eq:z-tilde}.
Let $\tilde f(\Delta) = \tilde z(x)\tp \tilde z(y) - k(\Delta)$ for $\Delta = x - y$,
and $\sigma_{w}^2 := \sup_{\Delta \in \X_\Delta} 1 + k(2 \Delta) - 2 k(\Delta)^2$.
Then
\begin{multline}
    \Pr\left(
\norm{\tilde f}_\infty
- \E \norm{\tilde f}_\infty
\ge
\varepsilon
\right)
\\\le
    2 \exp\left( - \frac{D \varepsilon^2}{D \, \E \norm{\tilde f}_\infty + \frac12 \sigma_{w}^2 + \frac{D \varepsilon}{6}} \right)
.\end{multline}
\end{proposition}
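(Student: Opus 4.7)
The plan is to apply Bousquet's inequality \parencite*{bousquet2002} to the centered empirical process $\Delta \mapsto \tilde f(\Delta) = \sum_{i=1}^{D/2} \tilde f_{\omega_i}(\Delta)$ indexed by $\Delta \in \X_\Delta$. All the required ingredients are already in place just above the proposition: the summands are iid and centered ($\E \tilde f_\omega = 0$ by \cref{eq:emb-mean}), uniformly bounded with $|\tfrac12 \tilde f_\omega(\Delta)| \le 2/D$ (since $|\cos - k|\le 2$), and have total wimpy variance $\sigma^2_{\tilde f/2} = \sigma_w^2/D$. Continuity of $\tilde f$ in $\Delta$ reduces the supremum over $\X_\Delta$ to a countable dense subset, so Bousquet's inequality is applicable.

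Since we want the two-sided supremum $\norm{\tilde f}_\infty = \max(Z_+, Z_-)$ with $Z_\pm := \sup_\Delta (\pm \tilde f(\Delta))$, I would apply Bousquet separately to each of $Z_+$ and $Z_-$ and combine via a union bound; this is the source of the leading factor of $2$. Because $\E Z_\pm \le \E \norm{\tilde f}_\infty$, any Bousquet bound on $\Pr(Z_\pm \ge \E Z_\pm + \varepsilon)$ also bounds $\Pr(Z_\pm \ge \E \norm{\tilde f}_\infty + \varepsilon)$, and the union bound then gives exactly the form stated in the proposition.

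Bousquet's inequality in its Bernstein form reads $\Pr(Z \ge \E Z + t) \le \exp\bigl(-t^2 / (2(v + bt/3))\bigr)$ with $v = \sigma^2 + 2 b\, \E Z$, where $b$ is the per-summand sup-norm bound and $\sigma^2$ is the wimpy variance. Applied to the rescaled process $\tilde f / 2$ (so that the summands are at most $2/D$ in absolute value and the total process is bounded by $1$), with $b = 2/D$, $\sigma^2 = \sigma^2_{\tilde f/2} = \sigma_w^2/D$, and $\E Z_\pm$ replaced by its upper bound $\E \norm{\tilde f}_\infty$, a routine algebraic clearing of the $D$ and $1/2$ factors produces the denominator $D\, \E \norm{\tilde f}_\infty + \tfrac12 \sigma_w^2 + D \varepsilon / 6$ quoted in the statement.

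I do not anticipate any genuine mathematical obstacle here: the variance was already computed just above the proposition, Bousquet's inequality is invoked off the shelf, and the two-sided issue dissolves into a one-line union bound. The only thing that requires care is algebraic bookkeeping --- keeping the $1/2$ and $2/D$ rescalings straight when substituting $b$, $\sigma^2_{\tilde f/2}$, and the upper bound on $\E Z_\pm$ into Bousquet's denominator so as to land on exactly the constants in the statement.
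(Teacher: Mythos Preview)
Your proposal is correct and mirrors the paper's proof almost exactly: the paper also applies the Bernstein form of Bousquet's inequality (Theorem~12.5 of \textcite{boucheron}) to $\tilde f/2$, uses the wimpy variance $\sigma_{\tilde f/2}^2 = \sigma_w^2/D$ computed just above, bounds $\E\sup(\pm\tilde f)$ by $\E\norm{\tilde f}_\infty$, and finishes with the same union bound over the two signs. The only cosmetic difference is that the paper cites the Boucheron--Lugosi--Massart statement directly rather than writing out the generic $b$, $\sigma^2$, $v$ parameters.
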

\begin{proof}
We use the Bernstein-style form of Theorem~12.5 of \textcite{boucheron}
on $\tilde f(\Delta) / 2$ to obtain % (replacing $\varepsilon$ with $2 \varepsilon$):
that
$\Pr\left( \sup \tilde f - \E \sup \tilde f \ge \varepsilon \right)$
is at most
\[
    \exp\left( - \frac{\varepsilon^2}{\E \sup \tilde f + \tfrac12 \sigma_{\tilde f / 2}^2 + \frac\varepsilon6} \right)
.\]
The same holds for $- \tilde f$,
and $\E \sup \tilde f \le \E \sup \norm{f}_\infty$,
$\E \sup (-\tilde f) \le \E \sup \norm{f}_\infty$.
The claim follows by a union bound.
\end{proof}

A bound on the lower tail, unfortunately, is not available in the same form.

For $\breve f$,
note $\abs{\breve f} \le 3$,
so we use $\breve f / 3$.
Letting $\breve f_\omega := \tfrac1D (\cos(\omega\tp\Delta) - k(\Delta))$,
we have
$\sigma_{\breve f / 3}^2 = \frac{1}{18 D} (\sigma_w^2 + 1)$.
% \begin{align}
%     \sigma_w^2
%   :&= \sup_{x, y \in \X} \sum_{i=1}^{D} \Var\left[ \breve f_{\omega_i}(\Delta) / 3 \right]
% \\&= \sup_{x, y \in \X} \frac{1}{18 D} \left[ 1 + k(2 \Delta) - 2 k(\Delta)^2 \right]
% ,\end{align}
Thus the same argument gives us:
\begin{proposition} \label{thm:uni:conc:breve}
Let $k$ and $\X$ be as in \cref{thm:uni:tilde},
with $P(\omega)$ defined as there.
Let $\breve z$ be as in \cref{eq:z-breve},
$\tilde f(x, y) = \tilde z(x)\tp \tilde z(y) - k(x, y)$,
and define $\sigma_{w}$ as above.
Then
\begin{multline}
    \Pr\left( \norm{\breve f}_\infty - \E \norm{\breve f}_\infty \ge \varepsilon \right)
    % \\ \le 2 \exp\left( - \frac{ 9 \varepsilon^2}{4 \E \norm{\breve f}_\infty + \frac{\sigma_w^2 + 1}{9 D} + \frac23 \varepsilon} \right)
    % \\ \le 2 \exp\left( - \frac{ 81 D \varepsilon^2}{36 D \, \E \norm{\breve f}_\infty + \sigma_w^2 + 1 + 6 D \varepsilon} \right)
    \\ \le 2 \exp\left( - \frac{ D \varepsilon^2}{\frac{4}{9} D \, \E \norm{\breve f}_\infty + \frac{1}{81} (\sigma_w^2 + 1) + \frac{2}{27} D \varepsilon} \right)
.\end{multline}
\end{proposition}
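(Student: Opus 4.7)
The plan is to follow the proof of \cref{thm:uni:conc} essentially verbatim, with $\breve f / 3$ playing the role of $\tilde f / 2$. Concretely: write $\breve f$ as an independent sum, rescale by $1/3$ so that $\abs{\breve f / 3} \le 1$, apply the Bernstein-style form of Theorem~12.5 of \textcite{boucheron} to the rescaled process $\breve f / 3$ indexed by $(x, y) \in \X^2$, and then take a union bound with the analogous statement for $-\breve f$ to convert $\sup \breve f$ concentration into a two-sided $\norm{\breve f}_\infty$ bound, using $\max\{\sup \breve f,\ \sup(-\breve f)\} = \norm{\breve f}_\infty$ together with $\E \sup(\pm \breve f) \le \E \norm{\breve f}_\infty$.

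The two inputs the theorem needs are both almost immediate. Setting
\[
  \breve f_{\omega, b}(x, y) := \tfrac{2}{D} \cos(\omega \tp x + b) \cos(\omega \tp y + b) - \tfrac{1}{D} k(x, y),
\]
we have $\breve f = \sum_{i=1}^D \breve f_{\omega_i, b_i}$ with each summand in $[-3/D, 3/D]$ (since $\abs{\cos \cdot \cos} \le 1$ and $\abs{k} \le 1$), giving the required bound $\abs{\breve f / 3} \le 1$. For the wimpy variance, the product-to-sum identity $\cos \alpha \cos \beta = \tfrac12 \cos(\alpha - \beta) + \tfrac12 \cos(\alpha + \beta)$ decomposes each $\breve f_{\omega, b}$ into a shift-invariant part and a noise part whose covariance vanishes by \cref{eq:emb-noise-mean}; summing $D$ iid copies, multiplying by $1/9$, and taking the supremum over $\X^2$ recovers the stated $\sigma_{\breve f / 3}^2 = \frac{1}{18 D}(\sigma_w^2 + 1)$ (equivalently, this is just \cref{eq:var-s-breve} with a factor of $1/9$).

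With these in hand, Bousquet's inequality in the same Bernstein-style form used in the proof of \cref{thm:uni:conc} produces an upper-tail bound on $\sup(\breve f / 3) - \E \sup(\breve f / 3)$; setting the deviation variable $u = \varepsilon / 3$ converts this to a bound on $\sup \breve f - \E \sup \breve f$, after which the three denominator coefficients $\tfrac{4}{9}$, $\tfrac{1}{81}$, $\tfrac{2}{27}$ in the stated bound emerge as the images of the three Bernstein-form terms ($\E \sup$, $\tfrac12 \sigma^2$, and $u/6$) under the $1/3$ rescaling, together with the inequality $\E \sup \breve f \le \E \norm{\breve f}_\infty$. Applying the same argument to $-\breve f$ controls the lower tail of $\sup \breve f$, and a union bound furnishes the factor of $2$. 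There is no substantive obstacle: the only mildly delicate point is tracking those three constants through the $1/3$ rescaling, which is a purely arithmetic bookkeeping check.
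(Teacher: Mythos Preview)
Your proposal is correct and follows the paper's own argument essentially verbatim: the paper simply notes that $\abs{\breve f} \le 3$, computes $\sigma_{\breve f/3}^2 = \tfrac{1}{18D}(\sigma_w^2+1)$, and states that ``the same argument'' as in \cref{thm:uni:conc} gives the result. Your decomposition of $\breve f$ into independent summands bounded by $3/D$, the wimpy-variance computation via \cref{eq:var-s-breve}, and the union bound over $\pm\breve f$ are exactly what the paper has in mind.
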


Note that \cref{thm:uni:conc:breve} actually gives a somewhat tighter concentration than
\cref{thm:uni:conc}.
This is most likely because,
between the space of possible errors being larger and the higher base variance illustrated in \cref{fig:rbf-var},
the $\breve f$ error function has more ``opportunities'' to achieve its maximal error.
The experimental results (\cref{fig:survivals}) show that, at least in one case,
$\norm{\breve f}_\infty$ does concentrate about its mean more tightly,
but that mean is enough higher than that of $\norm{\tilde f}_\infty$
that $\norm{\breve f}_\infty$ stochastically dominates $\norm{\tilde f}_\infty$.

\subsection{\texorpdfstring{$L_2$}{L2} ERROR BOUND} \label{sec:error:l2}

$L_\infty$ bounds provide useful guarantees,
but are very strict.
It can also be useful to consider a less stringent error measure.
Let $\mu$ be a $\sigma$-finite measure on $\X \times \X$;
define
\[
    \Norm{f}_\mu^2 := 
    \int_{\X^2} f(x, y)^2 \,\ud\mu(x, y)
\label{eq:norm-mu}
.\]

First, we have that
\begin{align}
       \E \norm{\tilde f}_\mu^2
  & =  \E \int_{\X^2} \tilde f(x, y)^2 \,\ud\mu(x, y)
\\& =  \int_{\X^2} \E\, \tilde f(x, y)^2 \,\ud\mu(x, y)
\label{eq:l2:integral-exchange}
\\& =  \int_{\X^2} \frac{1}{D} \left[ 1 + k(2x, 2y) - 2 k(x, y)^2 \right] \,\ud\mu(x, y)
\\& =  \frac{1}{D} \left[ \mu(\X^2) + \int_{\X^2} k(2x, 2y) \,\ud\mu(x, y) - 2 \Norm{k}_\mu^2 \right]
\\     \E \norm{\breve f}_\mu^2
  % & =  \E \int_{\X^2} \breve f(x, y)^2 \,\ud\mu(x, y)
% \\& =  \int_{\X^2} \E \breve f(x, y)^2 \,\ud\mu(x, y)
% \\& =  \int_{\X^2} \frac{1}{D} \left[ 1 + \frac12 k(2x, 2y) - k(x, y)^2 \right] \,\ud\mu(x, y)
  & =  \frac{1}{D} \left[ \mu(\X^2) + \frac12 \int_{\X^2} k(2x, 2y) \,\ud\mu(x, y) - \Norm{k}_\mu^2 \right]
\end{align}
where \cref{eq:l2:integral-exchange} is justified by Tonelli's theorem.

If $\mu = P_X \times P_Y$ is a joint distribution of independent variables,
then $\int_{\X^2} k(2x, 2y) \,\ud\mu(x, y) = \acro{mmk}(P_{2X}, P_{2Y})$,
where \acro{mmk} is the mean map kernel (see \cref{sec:downstream:mmd}).
Likewise, $\norm{k}_\mu^2 = \acro{mmk}(P_X, P_Y)$ using the kernel $k^2$.%
\footnote{$k^2$ is also a \acro{psd} kernel, by the Schur product theorem.}

Viewing $\norm{\tilde f}_\mu$ as a function of $\omega_1, \dots, \omega_{D/2}$,
changing $\omega_i$ to a different $\hat\omega_i$
changes the value of $\norm{\tilde f}_\mu$ by at most 
$4 \frac{4 D + 1}{D^2} \mu(\X^2)$;
this can be seen by simple algebra and is shown in
\ifbool{shortversion}{Appendix B.1}{\cref{proof:l2:diff:tilde}}.
Thus \textcite{mcdiarmid} gives us an exponential concentration bound:
\begin{proposition} \label{thm:l2:tilde}
    Let k be a continuous shift-invariant positive-definite function
    $k(x, y) = k(\Delta)$ defined on $\X \subseteq \R^d$,
    with $k(0) = 1$.
    Let $\mu$ be a $\sigma$-finite measure on $\X^2$,
    and define $\norm{\cdot}_\mu^2$ as in \cref{eq:norm-mu}.
    Define $\tilde z$ as in \cref{eq:z-tilde}
    and let $\tilde f(x, y) = \tilde z(x)\tp \tilde z(y) - k(x, y)$.
    Let $\mathcal{M} := \mu(\X^2)$.
    Then
    \begin{align}
      \Pr\left( \Abs{\norm{\tilde f}_\mu^2 - \E \norm{\tilde f}_\mu^2} \ge \varepsilon \right)
    % \le 2 \exp\left( \frac{- 2 \varepsilon^2}{D \frac{(16 D + 4)^2}{D^4} \mu(\X^2)^2} \right)
     &\le 2 \exp\left( \frac{- D^3 \varepsilon^2}{8 (4 D + 1)^2 \,\mathcal{M}^2} \right)
   \\&\le 2 \exp\left( \frac{- D \varepsilon^2}{200 \, \mathcal{M}^2} \right)
    .\end{align}
\end{proposition}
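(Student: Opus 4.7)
The plan is to obtain both displayed bounds as a direct application of McDiarmid's bounded differences inequality to the random variable $\|\tilde f\|_\mu^2$, viewed as a function of the $D/2$ i.i.d.\ samples $\omega_1,\ldots,\omega_{D/2}$. All the content is in checking the bounded-differences coefficients; once those are in hand, everything else follows essentially by substitution.

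First, fix an index $i$ and compare the two versions $\tilde f$ and $\tilde f'$ obtained by using $\omega_i$ versus an independent copy $\hat\omega_i$ (all other $\omega_j$ held fixed). The pointwise difference is
\[
    \tilde f(x,y) - \tilde f'(x,y) = \tfrac{2}{D}\bigl(\cos(\omega_i^\mathsf{T}\Delta) - \cos(\hat\omega_i^\mathsf{T}\Delta)\bigr),
\]
so $|\tilde f - \tilde f'| \le 4/D$ uniformly in $(x,y)$. Using the factorization $\tilde f^2 - \tilde f'^2 = (\tilde f - \tilde f')(\tilde f + \tilde f')$, I would then bound $|\tilde f + \tilde f'|$ by peeling off the $D/2 - 1$ shared cosine terms from the two swapped ones and from the constant $-2k$, using $|k|\le 1$. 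A bit of careful algebra — the main (though purely mechanical) obstacle, since a sloppy bound here inflates constants — yields a pointwise estimate that integrates against $\mu$ to
\[
    |\|\tilde f\|_\mu^2 - \|\tilde f'\|_\mu^2| \le \frac{4(4D+1)}{D^2}\, \mathcal{M},
\]
which is the bounded-differences constant $c_i$ asserted in the preceding paragraph, and which is independent of $i$ and of the other $\omega_j$.

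Plugging $c_i = 4(4D+1)\mathcal{M}/D^2$ for $i = 1,\ldots,D/2$ into McDiarmid's inequality gives
\[
  \Pr\bigl(|\|\tilde f\|_\mu^2 - \E\|\tilde f\|_\mu^2| \ge \varepsilon\bigr)
  \le 2\exp\!\left(-\frac{2\varepsilon^2}{\sum_{i=1}^{D/2} c_i^2}\right)
  = 2\exp\!\left(-\frac{D^3\varepsilon^2}{8(4D+1)^2\mathcal{M}^2}\right),
\]
which is the first displayed bound. For the simplified second bound, I would just use the crude estimate $4D+1 \le 5D$ (valid for $D\ge 1$), so $8(4D+1)^2 \le 200\, D^2$ and the exponent relaxes to $-D\varepsilon^2/(200\,\mathcal{M}^2)$.
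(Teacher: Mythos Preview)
Your proposal is correct and follows essentially the same route as the paper: establish a bounded-differences constant for $\|\tilde f\|_\mu^2$ as a function of the $D/2$ i.i.d.\ frequencies, then apply McDiarmid. The only cosmetic difference is that you obtain the constant via the factorization $\tilde f^2 - (\tilde f')^2 = (\tilde f - \tilde f')(\tilde f + \tilde f')$, whereas the paper (Appendix~B.1) expands the two squares directly and cancels the shared terms; both arguments deliver the same constant $4(4D+1)\mathcal{M}/D^2$, and your simplification $4D+1\le 5D$ for the second display is exactly what is intended.
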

The second version of the bound is simpler, but somewhat looser for $D \gg 1$;
asymptotically, the coefficient of the denominator becomes 128.

Similarly, the variation of $\norm{\breve f}_\mu$
is bounded by at most $32 \frac{D + 1}{D^2} \mu(\X^2)$
(shown in \ifbool{shortversion}{Appendix B.2}{\cref{proof:l2:diff:breve}}).
Thus:
\begin{proposition} \label{thm:l2:breve}
    Let $k$, $\mu$, $\norm{\cdot}_\mu$, and $\mathcal M$ be as in \cref{thm:l2:tilde}.
    Define $\breve z$ as in \cref{eq:z-breve}
    and let $\breve f(x, y) = \breve z(x)\tp \breve z(y) - k(x, y)$.
    Then
    \begin{align}
      \Pr\left( \Abs{\norm{\breve f}_\mu^2 - \E \norm{\breve f}_\mu^2} \ge \varepsilon \right)
     % &\le 2 \exp\left( - \frac{2 \varepsilon^2}{D \frac{1024 (D + 1)^2}{D^4} \,\mathcal{M}^2} \right)
     &\le 2 \exp\left( \frac{- D^3 \varepsilon^2}{512 (D + 1)^2 \,\mathcal{M}^2} \right)
   \\&\le 2 \exp\left( \frac{- D \varepsilon^2}{2048 \, \mathcal{M}^2} \right)
    .\end{align}
\end{proposition}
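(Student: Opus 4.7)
The plan is to mirror the argument for \cref{thm:l2:tilde}: view $\norm{\breve f}_\mu^2$ as a function of the $D$ independent random pairs $(\omega_i, b_i)$, verify a bounded-differences condition on this function, and then apply \textcite{mcdiarmid}. Since the statement already supplies the per-coordinate variation bound of $32(D+1)\mathcal{M}/D^2$ (with derivation deferred to the appendix), the tail bound itself then follows by routine arithmetic.

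The bounded-differences step proceeds as follows. Let $\breve f'$ denote the error function obtained by replacing the pair $(\omega_i, b_i)$ with an independent copy $(\hat\omega_i, \hat b_i)$, and write $\eta := \breve f' - \breve f$. From \cref{eq:s-breve}, $\eta$ is $1/D$ times the difference of two pairs of cosines, so $\norm{\eta}_\infty \le 4/D$ pointwise. Since each cosine is bounded by $1$, we also have $\norm{\breve s}_\infty \le 2$ and hence $\norm{\breve f}_\infty \le 3$. Expanding
\[
  \norm{\breve f'}_\mu^2 - \norm{\breve f}_\mu^2
  = \int_{\X^2} \bigl( 2 \breve f \, \eta + \eta^2 \bigr) \,\ud\mu,
\]
and bounding $\abs{2 \breve f \, \eta} \le 24/D$ and $\eta^2 \le 16/D^2$ pointwise, gives a variation of the form $C_1 \mathcal{M}/D + C_2 \mathcal{M}/D^2$, from which the claimed $32(D+1)\mathcal{M}/D^2$ is a convenient (and slightly loose) upper bound.

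Given this constant, set $c := 32 (D+1) \mathcal{M}/D^2$; McDiarmid's inequality applied to the $D$ iid inputs yields
\[
  \Pr\bigl( \abs{\norm{\breve f}_\mu^2 - \E \norm{\breve f}_\mu^2} \ge \varepsilon \bigr)
  \le 2 \exp\!\left( - \frac{2 \varepsilon^2}{D c^2} \right)
  = 2 \exp\!\left( - \frac{D^3 \varepsilon^2}{512 (D+1)^2 \mathcal{M}^2} \right),
\]
which is the first form in the statement. The looser second form follows from $(D+1)^2 \le 4 D^2$ for $D \ge 1$, which replaces $512 (D+1)^2/D^2$ in the denominator by $2048$.

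I expect the only nontrivial step to be the bounded-differences bookkeeping: one must carefully split the difference of squares into its linear and quadratic pieces in $\eta$ so that the $1/D$ and $1/D^2$ contributions land with the correct constants. The roughly twofold looser concentration relative to \cref{thm:l2:tilde} tracks the wider range of $\breve f$ (supported on $[-3,3]$ rather than $[-2,2]$) together with the different scaling ($1/D$ per slot across $D$ slots rather than $2/D$ per slot across $D/2$ slots). Everything beyond that is mechanical.
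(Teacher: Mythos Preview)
Your proposal is correct and follows essentially the same route as the paper: establish a per-coordinate bounded-differences constant of $32(D+1)\mathcal{M}/D^2$ and then apply McDiarmid over the $D$ independent pairs $(\omega_i,b_i)$. Your $2\breve f\,\eta+\eta^2$ organization of the difference of squares is a slightly cleaner repackaging of the explicit expansion in the paper's appendix (and in fact yields the marginally sharper intermediate constant $24D+16$ before you relax it to $32(D+1)$), but the argument is the same.
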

The cost of a simpler dependence on $D$ is higher here;
the asymptotic coefficient of the denominator is $512$.

\section{DOWNSTREAM ERROR} \label{sec:downstream}

Rahimi and Recht \parencite*{rahimi-nips,rahimi-allerton}
give a bound on the $L_2$ distance between any given function in the
reproducing kernel Hilbert space (\acro{rkhs})
induced by $k$
and the closest function in the \acro{rkhs} of $s$:
results invaluable for the study of learning rates.
In some situations, however,
it is useful to consider not the learning-theoretic convergence of hypotheses to the assumed ``true'' function,
but rather directly consider the difference in predictions
due to using the $z$ embedding
instead of the exact kernel $k$.

\subsection{KERNEL RIDGE REGRESSION} \label{sec:downstream:krr}
We first consider kernel ridge regression \parencite[\acro{KRR};][]{saunders:krr}.
Suppose we are given $n$ training pairs
$(x_i, y_i) \in \R^d \times \R$
as well as a regularization parameter $\lambda = n \lambda_0 > 0$.
We construct the training Gram matrix $K$ by $K_{ij} = k(x_i, x_j)$.
\acro{krr} gives predictions $h(x) = \alpha\tp k_x$,
where $\alpha = (K + \lambda I)^{-1} y$
and $k_x$ is the vector with $i$th component $k(x_i, x)$.\footnote{%
    If a bias term is desired,
    we can use $k'(x, x') = k(x, x') + 1$
    by appending a constant feature $1$ to the embedding $z$.
    Because this change is accounted for exactly,
    it affects the error analysis here only in that we must use
    $\sup \abs{k(x, y)} \le 2$,
    in which case the first factor of \cref{eq:krr:uni}
    becomes $(\lambda_0 + 2) / \lambda_0^2$.
}
When using Fourier features, one would not use $\alpha$,
but instead a primal weight vector $w$;
still, it will be useful for us to analyze the situation in the dual.

Proposition~1 of \textcite{cortes:approx} bounds the change in \acro{KRR} predictions
from approximating the kernel matrix $K$ by $\hat K$,
in terms of $\norm{\hat K - K}_2$.
They assume, however, that the kernel evaluations at test time $k_x$
are unapproximated,
which is certainly not the case when using Fourier features.
We therefore extend their result to \cref{thm:krr:approx}
before using it to analyze the performance of Fourier features.

\begin{proposition} \label{thm:krr:approx}
    Given a training set $\left\{ (x_i, y_i) \right\}_{i=1}^n$,
    with $x_i \in \R^d$ and $y_i \in \R$,
    let $h(x)$ denote the result of kernel ridge regression
    using the \acro{psd} training kernel matrix $K$
    and test kernel values $k_x$.
    Let $\hat h(x)$ be the same using a \acro{psd} approximation
    to the training kernel matrix $\hat K$
    and test kernel values $\hat k_x$.
    Further assume that the training labels are centered, $\sum_{i=1}^n y_i = 0$,
    and let $\sigma_y^2 := \frac{1}{n} \sum_{i=1}^n y_i^2$.
    Also suppose $\norm{k_x}_\infty \le \kappa$.
    Then:
    \[
        \Abs{h'(x) - h(x)}
        \le \frac{\sigma_y}{\sqrt{n} \lambda_0} \norm{\hat k_x - k_x}
          + \frac{\kappa \sigma_y}{n \lambda_0^2} \norm{\hat K - K}_2
    .\]
\end{proposition}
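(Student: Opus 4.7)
The plan is to decompose $\hat h(x) - h(x)$ additively and bound each piece using standard operator-norm inequalities, relying on the positive-definiteness of $K$ and $\hat K$ to control the resolvents $(K + \lambda I)^{-1}$ and $(\hat K + \lambda I)^{-1}$. Write $\alpha := (K + \lambda I)^{-1} y$ and $\hat\alpha := (\hat K + \lambda I)^{-1} y$, so that $h(x) = \alpha\tp k_x$ and $\hat h(x) = \hat\alpha\tp \hat k_x$. Adding and subtracting a mixed term yields
\[
  \hat h(x) - h(x)
  = \hat\alpha\tp (\hat k_x - k_x) + (\hat\alpha - \alpha)\tp k_x,
\]
and the two summands will produce the two pieces of the stated bound.

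For the first summand, since $\hat K$ is \acro{psd} every eigenvalue of $\hat K + \lambda I$ is at least $\lambda = n \lambda_0$, so $\norm{(\hat K + \lambda I)^{-1}}_2 \le 1/(n\lambda_0)$; together with $\norm{y} = \sqrt{n}\,\sigma_y$ this gives $\norm{\hat\alpha} \le \sigma_y / (\sqrt{n}\, \lambda_0)$, and Cauchy--Schwarz produces the first term $\tfrac{\sigma_y}{\sqrt{n}\,\lambda_0} \norm{\hat k_x - k_x}$.

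For the second summand I would use the resolvent identity
\[
  (\hat K + \lambda I)^{-1} - (K + \lambda I)^{-1}
  = -(\hat K + \lambda I)^{-1} (\hat K - K) (K + \lambda I)^{-1},
\]
so that $\hat\alpha - \alpha = -(\hat K + \lambda I)^{-1} (\hat K - K)\, \alpha$. Bounding the resolvent again by $1/(n\lambda_0)$ and using $\norm{\alpha} \le \sigma_y/(\sqrt{n}\,\lambda_0)$ (by the same argument applied to $K$) yields $\norm{\hat\alpha - \alpha} \le \sigma_y \norm{\hat K - K}_2 / (n^{3/2} \lambda_0^2)$. Finally $\norm{k_x} \le \sqrt{n}\,\kappa$ from the $\ell_\infty$ bound on $k_x$, and a last application of Cauchy--Schwarz yields $\kappa\, \sigma_y \norm{\hat K - K}_2 / (n \lambda_0^2)$, matching the second term of the claim.

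There is no real obstacle beyond carefully tracking the interplay between $\lambda$ and $\lambda_0 = \lambda/n$ and inserting the various $\sqrt{n}$ factors in the right places; the only subtle point is recognizing that the resolvent identity is what converts a spectral perturbation $\norm{\hat K - K}_2$ on the Gram matrix into a coefficient perturbation $\norm{\hat\alpha - \alpha}$. The centering hypothesis $\sum_i y_i = 0$ is not actually used in the derivation, and is presumably stated only for consistency with the usual \acro{krr} convention of centered targets.
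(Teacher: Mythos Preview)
Your proof is correct and follows essentially the same route as the paper: the same additive decomposition $\hat\alpha\tp(\hat k_x - k_x) + (\hat\alpha - \alpha)\tp k_x$, the same resolvent identity to control $\hat\alpha - \alpha$, and the same spectral bounds on $(K+\lambda I)^{-1}$ and $(\hat K+\lambda I)^{-1}$ via positive semidefiniteness. Your observation that the centering hypothesis $\sum_i y_i = 0$ is never used is also accurate; the paper's argument relies only on $\norm{y} = \sqrt{n}\,\sigma_y$, which follows directly from the definition of $\sigma_y$.
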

\begin{proof}
    Let $\alpha = (K + \lambda I)^{-1} y$,
    $\hat \alpha = (\hat K + \lambda I)^{-1} y$.
    Thus,
    using $\hat{M}^{-1} - M^{-1} = - \hat{M}^{-1} (\hat{M} - M) M^{-1}$,
    we have
    \begin{align}
           \hat\alpha - \alpha
      &  = - (\hat K + \lambda I)^{-1} (\hat K - K) (K + \lambda I)^{-1} y
    \\     \norm{\hat\alpha - \alpha}
      &\le \norm{(\hat K + \lambda I)^{-1}}_2
           \norm{\hat K - K}_2
           \norm{(K + \lambda I)^{-1}}_2
           \norm{y}
    \\&\le \frac{1}{\lambda^2} \norm{\hat K - K}_2 \, \norm y
    \end{align}
    since the smallest eigenvalues of $K + \lambda I$ and $\hat K + \lambda I$
    are at least $\lambda$.
    Since $\norm{k_x} \le \sqrt{n} \kappa$
    and $\norm{\hat \alpha} \le \norm{y} / \lambda$:
    \begin{align}
           \abs{\hat h(x) - h(x)}
      &  = \abs{\hat\alpha\tp \hat k_x - \alpha\tp k_x}
    \\&  = \abs{\hat\alpha\tp (\hat k_x - k_x) + (\hat\alpha - \alpha)\tp k_x}
    \\&\le \norm{\hat\alpha} \norm{\hat k_x - k_x} + \norm{\hat\alpha - \alpha} \norm{k_x}
    \\&\le \frac{\norm y}{\lambda} \norm{\hat k_x - k_x}
         + \frac{\sqrt{n} \kappa \norm y}{\lambda^2} \norm{\hat K - K}_2
    .\end{align}
    The claim follows from $\lambda = n \lambda_0$,
    $\norm y = \sqrt{n} \sigma_y$.
\end{proof}

Suppose that, per the uniform error bounds of \cref{sec:error:uni},
$\sup \Abs{k(x, y) - s(x, y)} \le \varepsilon$.
Then
$\norm{\hat k_x - k_x} \le \sqrt{n} \varepsilon$
and
$\norm{\hat K - K}_2 \le \norm{\hat K - K}_F \le n \varepsilon$,
and \Cref{thm:krr:approx} gives
\begin{align}
       \Abs{\hat h(x) - h(x)}
  &\le \frac{\sigma_y}{\sqrt{n} \lambda_0} \sqrt{n} \varepsilon
     + \frac{\sigma_y}{n \lambda_0^2} n \varepsilon
\\&\le \frac{\lambda_0 + 1}{\lambda_0^2} \sigma_y \varepsilon
\label{eq:krr:uni}
.\end{align}
Thus
\begin{align}
       \Pr\left( \Abs{h'(x) - h(x)} \ge \varepsilon \right)
  &\le \Pr\left( \norm{f}_\infty \ge \frac{\lambda_0^2 \varepsilon}{(\lambda_0 + 1) \sigma_y} \right)
.\end{align}
which we can bound
with \cref{thm:uni:tilde} or \ref{thm:uni:breve}.
We can therefore guarantee $\abs{h(x) - h'(x)} \le \varepsilon$
with probability at least $\delta$ 
if
\begin{multline}
    D
    = \Omega\left(
        d \,
        \left(\frac{(\lambda_0 + 1) \sigma_y}{\lambda_0^2 \, \varepsilon}\right)^2
\right.\\\left.
        \left[
            \log \delta
            + \log \frac{\lambda_0^2 \varepsilon}{(\lambda_0 + 1) \sigma_y}
            - \log \sigma_p \ell
        \right]
    \right)
.\end{multline}
Note that this rate does not depend on $n$.
If we want $h'(x) \to h(x)$
at least as fast
as $h(x)$'s convergence rate of $O(1/\sqrt{n})$ \parencite{bousquet:stability},
ignoring the logarithmic terms,
we thus need $D$ to be linear in $n$,
matching the conclusion of \textcite{rahimi-nips}.

\subsection{SUPPORT VECTOR MACHINES} \label{sec:downstream:svm}
Consider a Support Vector Machine (\acro{svm}) classifier with no offset,
such that $h(x) = w\tp \Phi(x)$ for a kernel embedding $\Phi(x) : \X \to \mathcal{H}$
and $w$ is found by
\[
    \argmin_{w \in \mathcal H}
        \frac12 \norm{w}^2
        + \frac{C_0}{n} \sum_{i=1}^n \max\left(0, 1 - y_i \langle w, \Phi(x_i) \rangle \right)
\]
where $\{ (x_i, y_i) \}_{i=1}^n$ is our training set
with $y_i \in \{-1, 1\}$,
and the decision function is
$h(x) = \langle w, \Phi(x) \rangle$.\footnote{%
    We again assume there is no bias term for simplicity;
    adding a constant feature again changes the analysis only in that it
    makes the $\kappa$ of \cref{thm:svm:approx} 2 instead of 1.
}
For a given $x$,
\textcite{cortes:approx}
consider an embedding in $\mathcal H = \R^{n+1}$
which is equivalent on the given set of points.
They bound $\Abs{\hat h(x) - h(x)}$
in terms of $\norm{\hat K - K}_2$
in their Proposition~2,
but again assume that the test-time kernel values $k_x$ are exact.
We will again extend their result in \cref{thm:svm:approx}:
\begin{proposition} \label{thm:svm:approx}
    Given a training set
    $\{ (x_i, y_i) \}_{i=1}^n$,
    with $x_i \in \R^d$ and $y_i \in \{ -1, 1 \}$,
    let $h(x)$ denote the decision function of an \acro{svm} classifier
    using the \acro{psd} training matrix $K$
    and test kernel values $k_x$.
    Let $\hat h(x)$ be the same
    using a \acro{psd} approximation to the training kernel matrix $\hat K$
    and test kernel values $\hat k_x$.
    Suppose $\sup k(x, x) \le \kappa$.
    Then:
    \begin{multline}
        \abs{\hat h(x) - h(x)}
        \\\le \sqrt{2} \kappa^\frac{3}{4} C_0 \left( \norm{\hat K - K}_2 + \norm{\hat k_x - k_x} + \abs{f_x} \right)^{1/4}
        \\+ \sqrt{\kappa} C_0 \left( \norm{\hat K - K}_2 + \norm{\hat k_x - k_x} + \abs{f_x} \right)^{1/2}
    ,\end{multline}
    where $f_x = \hat k(x, x) - k(x, x)$.
\end{proposition}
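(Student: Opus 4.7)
The plan is to extend Cortes et al.'s Proposition~2 by treating the test point on equal footing with the training points through an augmented kernel matrix, so that both the exact and approximate predictions can be written as SVM outputs in a common finite-dimensional feature space.

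First I would append $x$ as an $(n{+}1)$-st point and form augmented kernel matrices $\tilde K, \tilde{\hat K} \in \R^{(n+1) \times (n+1)}$ using $k$ and $\hat k := s$, respectively. The difference has block form
\[
  \tilde{\hat K} - \tilde K
  = \begin{pmatrix} \hat K - K & \hat k_x - k_x \\ (\hat k_x - k_x)\tp & f_x \end{pmatrix},
\]
and splitting this sum into the $n \times n$ corner, the off-diagonal border (whose spectral norm is $\Norm{\hat k_x - k_x}$, since it is a rank-$2$ symmetric piece whose nonzero singular value equals that vector's length), and the scalar bottom-right entry, the triangle inequality for $\Norm{\cdot}_2$ yields
\[
  E := \Norm{\tilde{\hat K} - \tilde K}_2 \le \Norm{\hat K - K}_2 + \Norm{\hat k_x - k_x} + \abs{f_x},
\]
which is precisely the quantity appearing inside the stated bound.

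Next I would introduce a common embedding via the PSD square roots $V := \tilde K^{1/2}$ and $\hat V := \tilde{\hat K}^{1/2}$, setting $\Phi(x_i) := V e_i$ and $\hat\Phi(x_i) := \hat V e_i$ (with $x_{n+1} = x$). These embeddings preserve every kernel value used either in training or at the test point, so the SVM primal weights $w, \hat w \in \R^{n+1}$ under these features produce exactly the decision functions $h, \hat h$ of the statement. The operator-norm Lipschitz property of the matrix square root gives $\Norm{V - \hat V}_2 \le \sqrt{E}$, hence $\Norm{\hat\Phi(x_i) - \Phi(x_i)} \le \sqrt{E}$ for every $i$. Now I apply the decomposition
\[
  \abs{\hat h(x) - h(x)} \le \Norm{\hat w} \Norm{\hat\Phi(x) - \Phi(x)} + \Norm{\hat w - w} \Norm{\Phi(x)}.
\]
The first piece uses the dual bound $\Norm{\hat w} \le C_0 \sqrt{\kappa}$ (from $\alpha_i \in [0, C_0/n]$ and $\Norm{\Phi(x_i)}^2 \le \kappa$) to yield the $\sqrt{\kappa} C_0 E^{1/2}$ term directly. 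For the second piece, $1$-strong convexity of the SVM objective $F$ at $w$, together with the analogous property of $\hat F$ at $\hat w$, gives
\[
  \Norm{\hat w - w}^2 \le (F - \hat F)(\hat w) - (F - \hat F)(w) \le C_0 \bigl(\Norm{w} + \Norm{\hat w}\bigr) \Norm{V - \hat V}_2,
\]
where the right-hand bound follows from $1$-Lipschitzness of the hinge loss combined with Cauchy-Schwarz and $\sum_i \Norm{\hat\Phi(x_i) - \Phi(x_i)}^2 \le (n{+}1) \Norm{V - \hat V}_2^2$. Substituting $\Norm{w}, \Norm{\hat w} \le C_0 \sqrt{\kappa}$ and $\Norm{V - \hat V}_2 \le \sqrt{E}$ yields $\Norm{\hat w - w} \le \sqrt{2}\, C_0 \kappa^{1/4} E^{1/4}$, and multiplying by $\Norm{\Phi(x)} \le \sqrt{\kappa}$ produces the $\sqrt{2}\, \kappa^{3/4} C_0 E^{1/4}$ term.

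The main obstacle is the strong-convexity calculation: one must bound $(F - \hat F)(\hat w) - (F - \hat F)(w)$ as a single difference of signed quantities rather than a sum of absolute values (which would lose a factor of $2$ under the square root and destroy the advertised $\sqrt{2}$), and one must use the dual bound on $\Norm{w}$ rather than the primal bound $\Norm{w} \le \sqrt{2 C_0}$ at the right step to obtain the $\kappa^{3/4}$ and $C_0^1$ exponents. A secondary technical point is ensuring that the spectral-norm triangle inequality on the bordered block matrix is tight, so that the three terms combining into $E$ appear with coefficient $1$.
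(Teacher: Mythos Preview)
Your proposal is correct and follows essentially the same route as the paper: augment the kernel matrix with the test point, embed via the PSD square root, bound the block perturbation $\Norm{\tilde{\hat K}-\tilde K}_2$ by the three-term sum, use the square-root Lipschitz estimate, and split $\hat h(x)-h(x)$ into a $\Norm{\hat w - w}$ piece and a $\Norm{\hat\Phi(x)-\Phi(x)}$ piece. The only cosmetic difference is that the paper simply quotes Cortes et al.'s inequality $\Norm{\hat w - w}^2 \le 2 C_0^2 \sqrt{\kappa}\,\Norm{\hat K_x^{1/2}-K_x^{1/2}}_2$ (their (17)) and their Lemma~1 for the square-root bound, whereas you re-derive the former via the strong-convexity argument---which is exactly how Cortes et al.\ obtain it---so the two proofs coincide.
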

\begin{proof}
    Use the setup of Section~2.2 of \textcite{cortes:approx}.
    In particular, we will use
    $\norm{w} \le \sqrt{\kappa} C_0$
    and their (16-17):
    \begin{gather}
        \Phi(x_i) = K_{x}^{1/2} e_i
        \\% ;\;\hfill
        \Norm{\hat w - w}^2 \le 2 C_0^2 \sqrt{\kappa} \norm{\hat K_{x}^{1/2} - K_{x}^{1/2}}
    ,\end{gather}
    where $K_{x} = \begin{bmatrix} K & k_x \\ k_x\tp & k(x, x) \end{bmatrix}$
    and $e_i$ the $i$th standard basis.

    Further, Lemma~1 of \textcite{cortes:approx} says that
    $\norm{\hat K_{x}^{1/2} - K_{x}^{1/2}}_2 \le \norm{\hat K_{x} - K_{x}}_2^{1/2}$.
    Let $f_x := \hat k(x, x) - k(x, x)$;
    Then, by Weyl's inequality for singular values,
    \[
        \Norm{\begin{bmatrix}
            \hat K - K
          & \hat k_x - k_x
         \\ \hat k_x\tp - k_x\tp
          & f_x
        \end{bmatrix}}_2
        \le
        \norm{\hat K - K}_2
        + \norm{\hat k_x - k_x}
        + \Abs{f_x}
    .\]
    Thus
    \begin{align}
       \lvert \hat h&(x) - h(x) \rvert
\\\quad&  = \Abs{(\hat w - w)\tp \hat\Phi(x) + w\tp (\hat\Phi(x) - \Phi(x))}
\\&\le \norm{\hat w - w} \norm{\hat\Phi(x)} + \norm{w} \norm{\hat\Phi(x) - \Phi(x)}
\\&\le \sqrt{2} \kappa^\frac{1}{4} C_0 \norm{\hat K_{x}^{1/2} - K_{x}^{1/2}}_2^{1/2} \sqrt\kappa
  \\&\quad + \sqrt{\kappa} C_0 \norm{(\hat K_{x}^{1/2} - K_{x}^{1/2}) e_{n+1}}
\\&\le \sqrt{2} \kappa^\frac{3}{4} C_0 \norm{\hat K_{x} - K_{x}}_2^{1/4}
  \\&\quad + \sqrt{\kappa} C_0 \norm{\hat K_{x} - K_{x}}^{1/2}
\\&\le \sqrt{2} \kappa^\frac{3}{4} C_0 \left( \norm{\hat K - K}_2 + \norm{\hat k_x - k_x} + \abs{f_x} \right)^{1/4}
  \\&\quad + \sqrt{\kappa} C_0 \left( \norm{\hat K - K}_2 + \norm{\hat k_x - k_x} + \abs{f_x} \right)^{1/2}
    \end{align}
    as claimed.
\end{proof}

Suppose that
$\sup \abs{k(x, y) - s(x, y)} \le \varepsilon$.
Then, as in the last section,
$\norm{\hat k_x - k_x} \le \sqrt n \varepsilon$
and $\norm{\hat K - K}_2 \le n \varepsilon$.
Then,
letting $\gamma$ be $0$ for $\tilde z$ and $1$ for $\breve z$,
\cref{thm:svm:approx} gives
\begin{align}
    \abs{\hat h(x) - h(x)}
  &\le \sqrt{2} C_0 \left( n + \sqrt{n} + \gamma \right)^{1/4} \varepsilon^{1/4}
\\&\qquad
      + C_0 \left( n + \sqrt{n} + \gamma \right)^{1/2} \varepsilon^{1/2}
.\end{align}
Then $\abs{\hat h(x) - h(x)} \ge u$
only if
\[
    \varepsilon \le
    \frac{2 C_0^2 + 4 C_0 u + u^2 - 2 (C_0 + u) \sqrt{C_0 (C_0 + 2 u)}}%
         {C_0^2 (n + \sqrt n + \gamma)}
.\]
This bound has the unfortunate property of requiring the approximation to
be \emph{more} accurate as the training set size increases,
and thus can prove only a very loose upper bound on the number of features
needed to achieve a given approximation accuracy,
due to the looseness of \cref{thm:svm:approx}.
Analyses of generalization error in the induced $\acro{rkhs}$,
such as \textcite{rahimi-nips,yang:nystroem-vs-fourier},
are more useful in this case.

\subsection{MAXIMUM MEAN DISCREPANCY} \label{sec:downstream:mmd}
Another area of application for random Fourier embeddings is
to the mean embedding of distributions,
which uses some kernel $k$
to represent a probability distribution $P$
in the \acro{rkhs} induced by $k$ as
$\varphi(P) = \E_{x \sim P}\left[ k(x, \cdot) \right]$.
For samples $\{ X_i \}_{i=1}^n \sim P$ and $\{ Y_j \}_{j=1}^m \sim Q$,
we can estimate the inner product in the embedding space,
the \emph{mean map kernel} (\acro{mmk}), by
\[
    \acro{mmk}(X, Y)
    := \frac{1}{n m} \sum_{i=1}^n \sum_{j=1}^m k(X_i, Y_j)
    \approx \left\langle \varphi(P), \varphi(Q) \right\rangle
.\]
The distance $\norm{\varphi(P) - \varphi(Q)}$ is known as the
\emph{maximum mean discrepancy} (\acro{mmd}),
which can be estimated with:
\begin{multline}
    \norm{\varphi(P) - \varphi(Q)}^2
    \\= 
      \left\langle \varphi(P), \varphi(P) \right\rangle
    + \left\langle \varphi(Q), \varphi(Q) \right\rangle
    - 2 \left\langle \varphi(P), \varphi(Q) \right\rangle
    % \approx
    %   \frac{1}{n^2} \sum_{i=1}^n \sum_{i'=1}^n k(X_i, X_{i'})
    % + \frac{1}{m^2} \sum_{j=1}^m \sum_{j'=1}^m k(Y_j, Y_{j'})
    % - 2 \frac{1}{n m} \sum_{i=1}^n \sum_{j=1}^m k(X_i, Y_j)
.\end{multline}
$\acro{mmk}(X, X)$ is a biased estimator,
because of the $k(X_i, X_i)$ and $k(Y_i, Y_i)$ terms;
removing them gives an unbiased estimator \parencite{gretton:two-sample}.
The \acro{mmk} can be used in standard kernel methods
to perform learning on probability distributions,
such as when images are treated as sets of local patch descriptors \parencite{muandet:smm}
or documents as sets of word descriptors \parencite{yoshikawa:latent-smm}.
The \acro{mmd} has strong applications to two-sample testing,
where it serves as the statistic for testing the hypothesis that $X$ and $Y$
are sampled from the same distribution \parencite{gretton:two-sample};
this has applications in, for example,
comparing microarray data from different experimental situations
or in matching attributes when merging databases.

The \acro{mmk} estimate can clearly be approximated with an explicit embedding:
if $k(x, y) \approx z(x)\tp z(y)$,
\begin{align}
    \acro{MMK}_z(X, Y)
  &= \frac{1}{nm} \sum_{i=1}^n \sum_{j=1}^m z(X_i)\tp z(Y_j)
\\&= \left( \frac{1}{n} \sum_{i=1}^n z(X_i) \right)\tp
     \left( \frac{1}{m} \sum_{j=1}^m z(Y_j) \right)
\\&= \bar{z}(X)\tp \bar{z}(Y)
.\end{align}
Thus the biased estimator of $\acro{mmk}(X, X)$ is just
$\Norm{\bar{z}(X)}^2$;
the unbiased estimator is
\[
\frac{n^2}{n^2 - n} \left(\Norm{\bar{z}(X)}^2 - \frac{1}{n^2}\sum_{i=1}^n \Norm{z(X_i)}^2\right)
% \\= \frac{n}{n - 1} \norm{\bar{z}(X)}^2 - \frac{1}{n (n-1)} \sum_{i=1}^n \Norm{z(X_i)}^2
% \\= \frac{1}{n-1} \left( n \Norm{\bar{z}(X)^2} - 1 \right)
\]
When $z(x)\tp z(x) = 1$, as with $\tilde z$,
this simplifies to
$\frac{n}{n-1} \Norm{\bar{z}(X)}^2 - \frac{1}{n-1}$.
When that is not necessarily true,
as with $\breve z$,
that simplification holds only in expectation.

This has been noticed a few times in the literature,
e.g.\ by \textcite{li:relative-outliers}.
\Textcite{gretton:two-sample} gives different linear-time test statistics
based on subsampling the sum over pairs;
this version avoids reducing the amount of data used in favor of approximating the kernel.
Additionally, when using the \acro{mmk} in a kernel method
this approximation allows the use of linear solvers,
whereas the other linear approximations must still perform some pairwise computation.
\Textcite{zhao:fastmmd} compare the empirical performance of
an approximation equivalent to $\breve z$
against other linear-time approximations for two-sample testing.
They find it is slower than the \acro{mmd}-linear approximation but far more accurate,
while being more accurate and comparable in speed to a block-based $B$-test \parencite{zaremba:b-test}.

\Textcite{zhao:fastmmd} also state a simple uniform error bound on the quality of this approximation.
Specifically,
since we can write
$\Abs{\acro{mmk}_z(X, Y) - \acro{mmk}(X, Y)}$
as the mean of $\Abs{f(X_i, Y_j)}$,
uniform error bounds on $f$ apply directly to $\acro{mmk}_z$,
including to the unbiased version of $\acro{mmk}_z(X, X)$.
Moreover,
since
$\acro{MMD}^2(X, Y) = \acro{MMK}(X, X) + \acro{MMK}(Y, Y) - 2 \acro{MMK}(X, Y)$,
its error is at most $4$ times $\norm{f}_\infty$.
The advantage of this bound is that it applies uniformly to all sample sets
on the input space $\X$,
which is useful when we use $\acro{MMK}$ for a kernel method.

For a single two-sample test, however, we can get a tighter bound.
Consider $X$ and $Y$ fixed for now.
Note that $\E \acro{mmk}_z(X, Y) = \acro{mmk}(X, Y)$, by linearity of expectation.
The variance of $\acro{mmk}_z(X, Y)$ is exactly
\[
    \frac{1}{n^2 m^2} \sum_{i, j} \sum_{i', j'} \Cov\left( s(X_i, Y_j), s(X_{i'}, Y_{j'}) \right)
    \label{eq:mmd:variance}
,\]
which can be evaluated using the formulas of \cref{sec:error:variance}
and so, viewed only as a function of $D$, is $O(1 / D)$.
Alternatively,
we can use a bounded difference approach:
viewing $\acro{mmk}_{\tilde z}(X, Y)$ as a function of the $\omega_i$s,
changing $\omega_i$ to $\hat\omega_i$ changes the \acro{mmk} estimate by
\[
   \Abs{
     \frac{1}{n m} \sum_{i=1}^n \sum_{j=1}^m
        \frac{2}{D} \left(
            \cos(\hat\omega_i\tp (X_i - Y_j))
          - \cos(    \omega_i\tp (X_i - Y_j))
        \right)
   }
,\]
which is at most $4 / D$.
The bound for $\breve z$ is in fact the same here.
Thus McDiarmid's inequality tells us that for fixed sets $X$ and $Y$
and either $z$,
\[
    \Pr\left( \Abs{\acro{mmk}_{z}(X, Y) - \acro{mmk}(X, Y)} \right)
    \le 2 \exp\left( - \tfrac{1}{8} D \varepsilon^2 \right)
.\]
Thus $\E \Abs{\acro{mmk}_z(X, Y) - \acro{mmk}(X, Y)} \le 2 \sqrt{2 \pi / D}$.
Similarly,
$\acro{mmd}_z$ can be changed by at most $16 / D$,
giving
\[
    \Pr\left( \Abs{\acro{mmd}_{z}(X, Y) - \acro{mmd}(X, Y)} \right)
    \le 2 \exp\left( - \tfrac{1}{128} D \varepsilon^2 \right)
\]
and expected absolute error of at most $8 \sqrt{2 \pi / D}$.

Now, if we consider the distributions $P$ and $Q$ to be fixed but the sample sets random,
Theorems~7 and 10 of \textcite{gretton:two-sample} give exponential convergence
bounds for the biased and unbiased population estimators of \acro{mmd},
which can easily be combined with the above bounds.
Note that this approach allows the domain $\X$ to be unbounded, unlike the other bound.
One could extend this to a bound uniform over some smoothness class of distributions
using the techniques of \cref{sec:error:uni},
though we do not do so here.

\section{NUMERICAL EVALUATION} \label{sec:experiments}

\subsection{APPROXIMATION ON AN INTERVAL}

We first conduct a detailed study of the approximations on the interval $\X = [-b, b]$.
Specifically,
we evenly spaced $1\,000$ points on $[-5, 5]$
and approximated the kernel matrix using both embeddings
at $D \in \{ 50, 100, 200, \dots, 900, 1\,000, 2\,000, \dots, 9\,000, 10\,000\}$,
repeating each trial $1\,000$ times,
estimating $\norm{f}_\infty$ and $\norm{f}_\mu$ at those points.
We do not consider $d > 1$ here,
because obtaining a reliable estimate of $\sup \abs{f}$
becomes very computationally expensive even for $d = 2$.

\Cref{fig:max-by-bw} shows the behavior of $\E \norm{f}_\infty$ as $b$ increases
for various values of $D$.
As expected, the $\tilde z$ embeddings have almost no error near $0$.
The error increases out to one or two bandwidths,
after which the curve appears approximately linear in $\ell / \sigma$,
as predicted by \cref{thm:exp-sup-tilde}.

\begin{figure}[tbh]
  \centering
  \includegraphics[width=.48\textwidth]{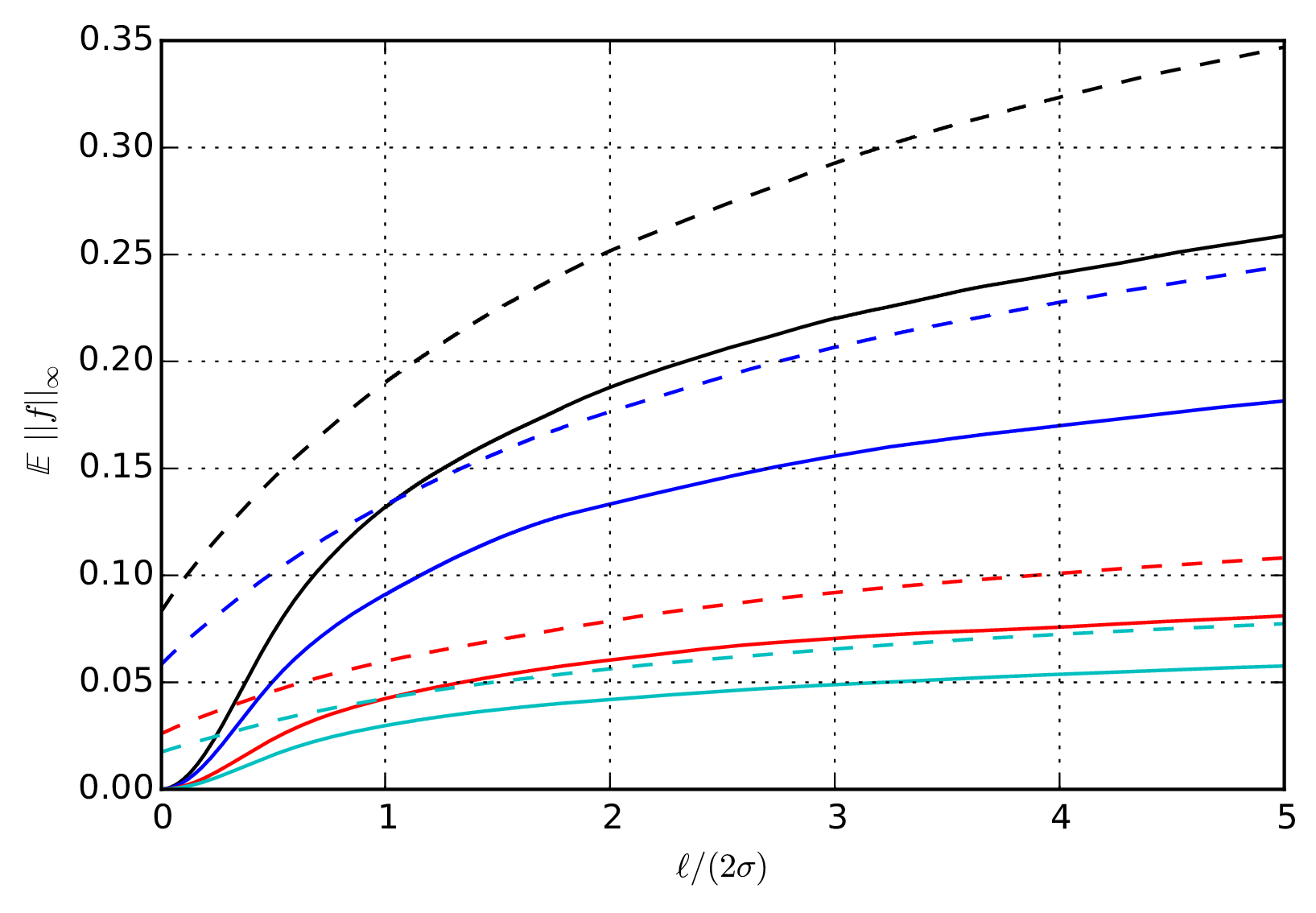}
  \caption{
    The maximum error within a given radius in $\R$,
    averaged over $1\,000$ evaluations.
    Solid lines represent $\tilde z$ and dashed lines $\breve z$;
    black is $D = 50$, blue is $D = 100$, red $D = 500$, and cyan $D = 1\,000$.
  }
  \label{fig:max-by-bw}
\end{figure}

\Cref{fig:bounds} fixes $b = 3$ and shows the expected maximal error as a function of $D$.
It also plots the expected error obtained by numerically integrating the bounds of
\cref{thm:uni:tilde,thm:uni:breve}
(using the minimum of 1 and the bound).
We can see that all of the bounds are fairly loose,
but that the first version of the bound in the propositions
(with $\beta_d$, the exponent depending on $d$, and $\alpha_\varepsilon$)
is substantially tighter than the second version when $d = 1$.

\begin{figure}[tbh]
  \centering
  \includegraphics[width=.48\textwidth]{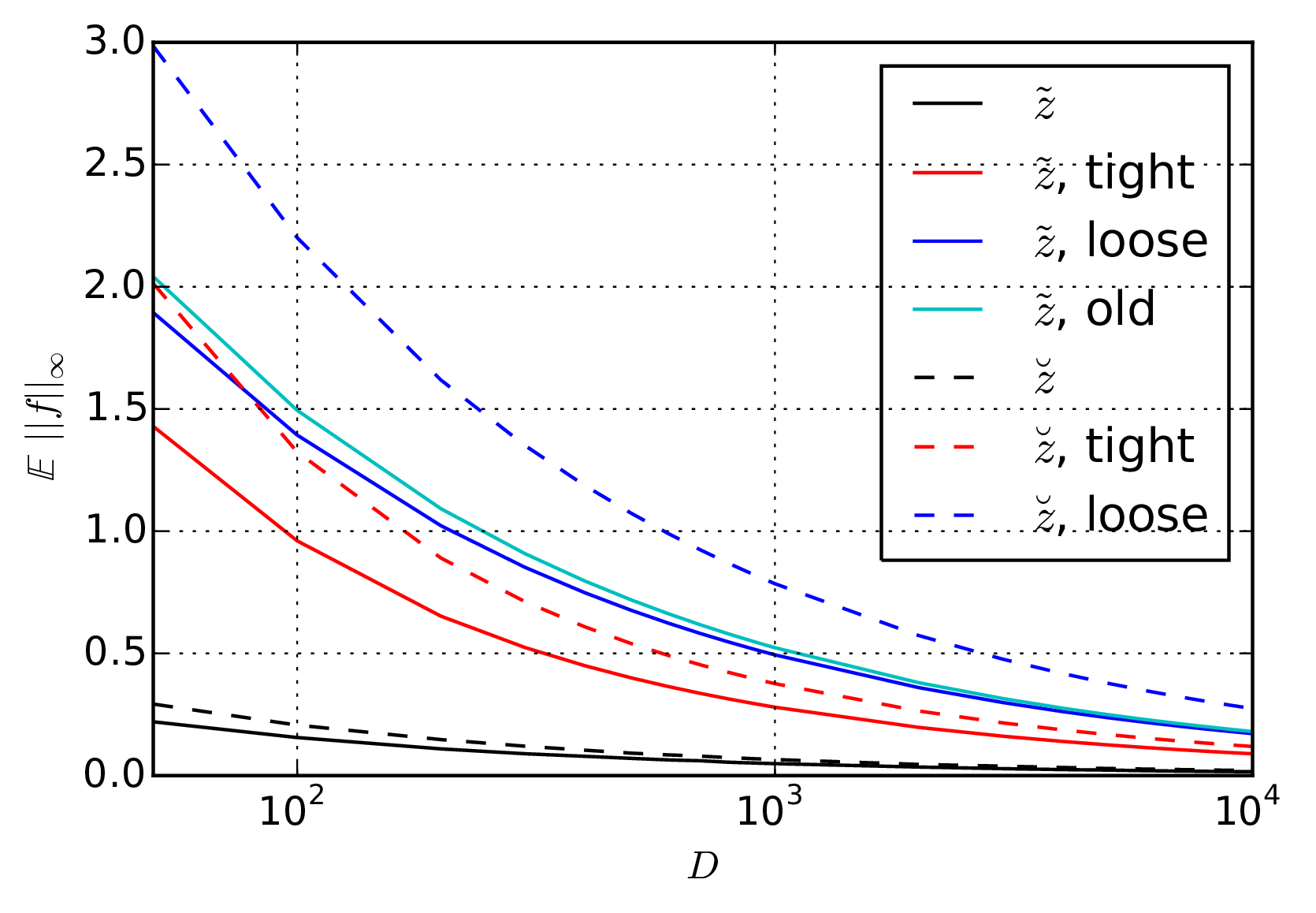}
  \caption{
    $\E \norm{f}_\infty$ for the Gaussian kernel on $[-3, 3]$ with $\sigma = 1$,
    based on the mean of $1\,000$ evaluations
    and on numerical integration of the bounds from \cref{thm:uni:tilde,thm:uni:breve}.
    (``Tight'' refers to the bound with constants depending on $d$,
    and ``loose'' the second version;
    ``old'' is the version from \textcite{rks}.)
  }
  \label{fig:bounds}
\end{figure}

The bounds on $\E \norm{f}_\infty$ of \cref{thm:exp-sup-tilde,thm:exp-sup-breve} are unfortunately too loose to show on the same plot. % 77 at D=50, 7.7 at D=10k for tilde
However, one important property does hold.
For a fixed $\X$, \cref{eq:exp-bound:gaussian} predicts that
$\E \norm{f}_\infty = O( {1}/{\sqrt{D}} )$.
This holds empirically:
performing linear regression of $\log \E \norm{\tilde f}_\infty$
against $\log D$
yields a model of $\E \norm{\tilde f}_\infty = e^c D^m$,
with a 95\% confidence interval for $m$ of $[-0.502, -0.496]$;
$\norm{\breve f}_\infty$ gives $[-0.503, -0.497]$.
The integrated bounds of \cref{thm:uni:tilde,thm:uni:breve}
do not fit the scaling as a function of $D$ nearly as well.

\Cref{fig:survivals} shows the empirical survival function of the max error for $D = 500$,
along with the bounds of \cref{thm:uni:tilde,thm:uni:breve}
and those of \cref{thm:uni:conc,thm:uni:conc:breve} using the empirical mean.
The latter bounds are tighter than the former for low $\varepsilon$,
especially for low $D$,
but have a lower slope.

\begin{figure}[tbh]
  \centering
  \includegraphics[width=.48\textwidth]{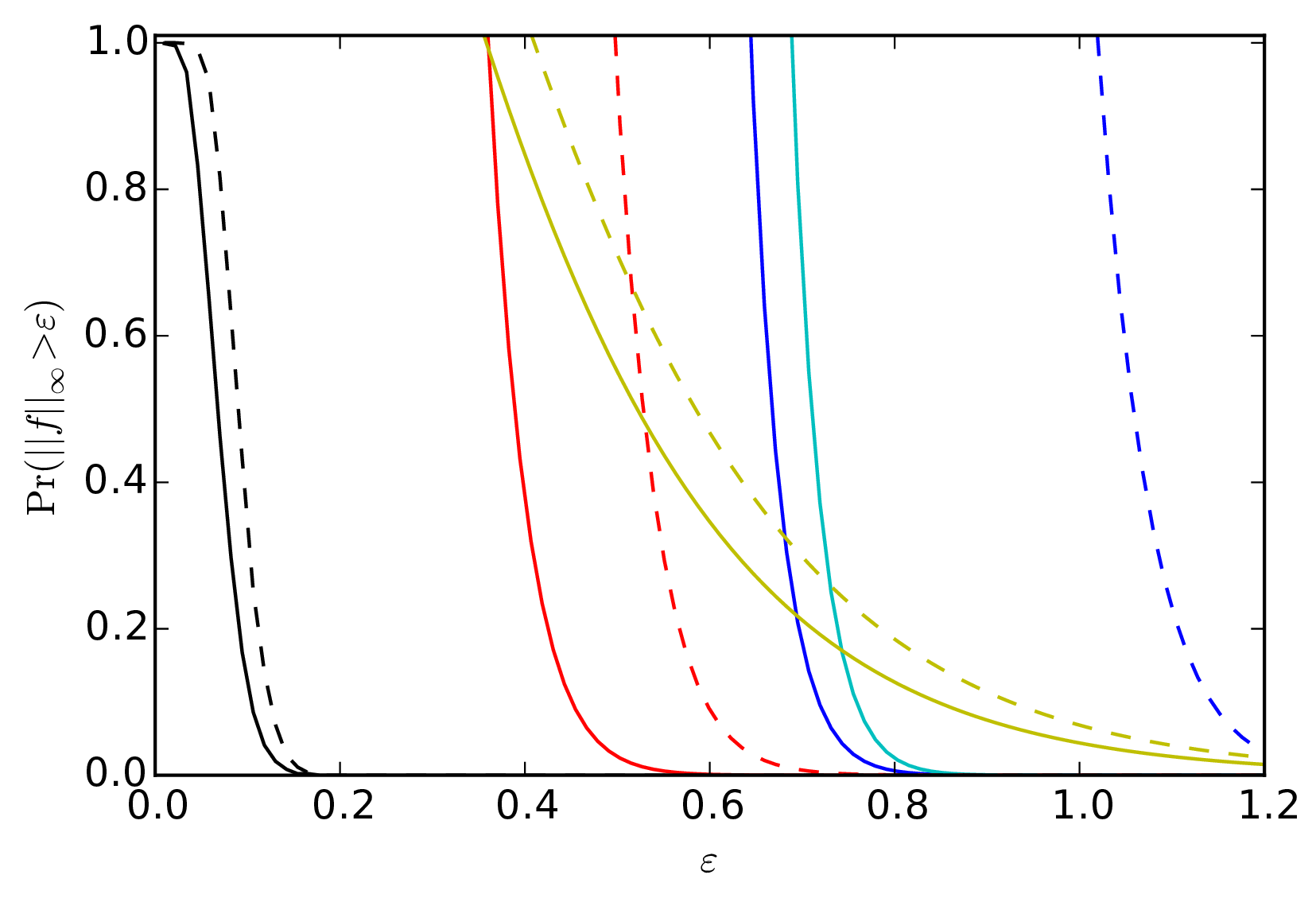}
  \caption{
    $\Pr\left( \E \norm{f}_\infty > \varepsilon \right)$
    for the Gaussian kernel on $[-3, 3]$ with $\sigma = 1$ and $D = 500$,
    based on $1\,000$ evaluations (black),
    numerical integration of the bounds from \cref{thm:uni:tilde,thm:uni:breve}
    (same colors as \cref{fig:bounds}),
    and the bounds of \cref{thm:uni:conc,thm:uni:conc:breve}
    using the empirical mean (yellow).
  }
  \label{fig:survivals}
\end{figure}

The mean of the mean squared error, on the other hand,
exactly follows the expectation of \cref{sec:error:l2}
using $\mu$ as the uniform distribution on $\X^2$:
in this case,
$\E \norm{\tilde f}_\mu \approx 0.66 / D$,
$\E \norm{\breve f}_\mu \approx 0.83 / D$.
(This is natural, as the expectation is exact.)
Convergence to that mean, however,
is substantially faster than guaranteed
by the McDiarmid bound of \cref{thm:l2:tilde,thm:l2:breve}.
We omit the plot due to space constraints.

\subsection{MAXIMUM MEAN DISCREPANCY}
We now turn to the problem of computing the \acro{mmd} with a Fourier embedding.
Specifically, we consider the problem of distinguishing the standard normal distribution
$\mathcal{N}(0, I_p)$
from the two-dimensional mixture
$0.95 \mathcal{N}(0, I_2) + 0.05 \mathcal{N}(0, \tfrac14 I_2)$.
We take fixed sample sets $X$ and $Y$ each of size $1\,000$
and compute the biased \acro{mmd} estimate
with varying $D$ for both $\tilde z$ and $\breve z$,
we used a Gaussian kernel of bandwidth 1.
The mean absolute errors of the resulting estimates are shown in \cref{fig:mmd}.
$\tilde z$ performs mildly better than $\breve z$.

Again, the McDiarmid bound of \cref{sec:downstream:mmd}
predicts that the mean absolute error decays as $O(1 / \sqrt{D})$,
but with too high a multiplicative constant;
the 95\% confidence interval for the exponent of $D$ is
$[-0.515, -0.468]$ for $\tilde z$
and $[-0.520, -0.486]$ for $\breve z$.
We also know that the expected root mean squared error
decays like $O(1 / \sqrt{D})$
via \cref{eq:mmd:variance}.

\begin{figure}[tbh]
  \centering
  \includegraphics[width=.48\textwidth]{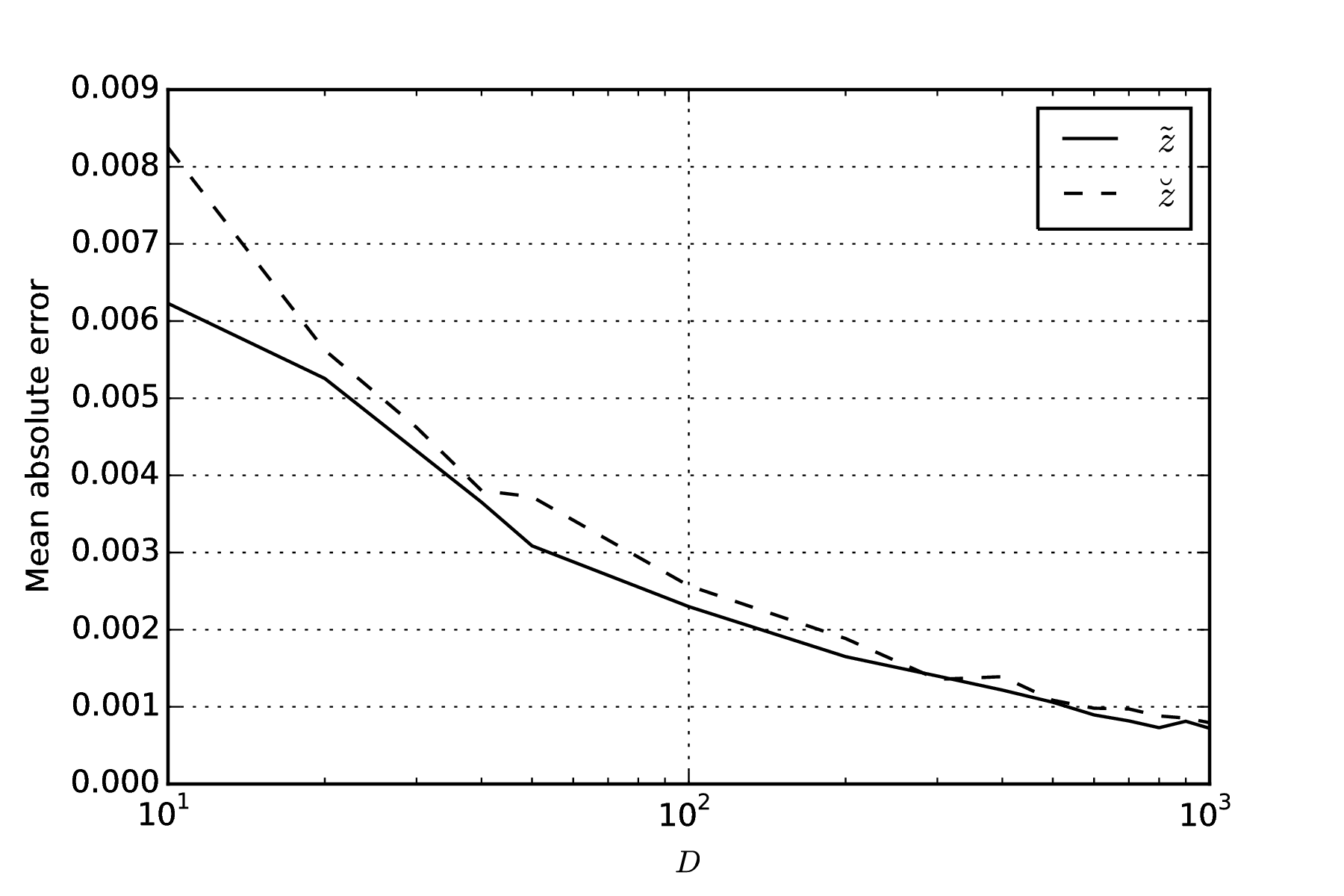}
  \caption{
    Mean absolute error of the biased estimator for $\acro{mmd}(X, Y)$,
    based on 100 evaluations.
  }
  \label{fig:mmd}
\end{figure}

\section{DISCUSSION}
We provide a novel investigation of the approximation error of the popular random Fourier features,
tightening existing bounds and showing new ones,
including an analytic bound on $\E \norm{f}_\infty$ and exponential concentration about its mean,
as well as an exact form for $\E \norm{f}_\mu$ and exponential concentration in that case as well.
We also extend previous results on the change in learned models due to kernel approximation.
We verify some aspects of these bounds empirically for the Gaussian kernel.
We also point out that,
of the two embeddings provided by \textcite{rks},
the $\tilde z$ embedding (with half as many sampled frequencies, but no additional noise due to phase shifts)
is superior in the most common case of the Gaussian kernel.

\subsubsection*{Acknowledgments}
This work was funded in part by DARPA grant FA87501220324.
DJS is also supported by a Sandia Campus Executive Program fellowship.

\clearpage
\subsubsection*{References}
\printbibliography[heading=none]

%%%%%%%%%%%%%%%%%%%%%%%%%%%%%%%%%%%%%%%%%%%%%%%%%%%%%%%%%%%%%%%%%%%%%%%%%%%%%%%%
\ifshortversion\else
\onecolumn
\clearpage
\appendix
\allowdisplaybreaks
\section{PROOFS FOR UNIFORM ERROR BOUND (SECTION \ref{sec:error:uni})} \label{proof:uni}

\subsection{PROOF OF PROPOSITION \ref{thm:uni:tilde}} \label{proof:uni:tilde}

The proof strategy closely follows that of \textcite{rks};
we fill in some (important) details,
tightening some parts of the proof as we go.

Let $\X_\Delta = \{ x - y \mid x, y \in \X \}$.
It's compact, with diameter at most $2 \ell$,
so we can find an $\varepsilon$-net covering $\X_\Delta$
with at most $T = (4 \ell / r)^d$ balls of radius $r$ \parencite[Proposition 5]{cucker:foundations}.
Let $\{\Delta_i\}_{i=1}^T$ denote their centers,
and $L_{\tilde f}$ be the Lipschitz constant of $\tilde f$.
If $\abs{\tilde f(\Delta_i)} < \varepsilon / 2$ for all $i$
and $L_{\tilde f} < \varepsilon / (2 r)$,
then $\abs{\tilde f(\Delta)} < \varepsilon$ for all $\Delta \in \M_\Delta$.

Let $\tilde z_i(x) := \begin{bmatrix} \sin(\omega_i\tp x) & \cos(\omega_i\tp x) \end{bmatrix}\tp$,
so that $\displaystyle \tilde z(x)\tp \tilde z(y) = \frac{1}{D/2} \sum_{i=1}^{D/2} \tilde z_i(x)\tp \tilde z_i(y)$.

\subsubsection{Regularity Condition} \label{sec:regularity-tilde}
We will first need to establish that $\E \nabla \tilde s(\Delta) = \nabla \E \tilde s(\Delta) = \nabla k(\Delta)$.
This can be proved via the following form of the Leibniz rule,
quoted verbatim from \textcite{leibnizrule}:

\begin{theorem*}[{\cite[Theorem 2]{leibnizrule}}]
Let $X$ be an open subset of $\R$,
and $\Omega$ be a measure space.
Suppose $f : X \times \Omega \to \R$
satisfies the following conditions:

\begin{enumerate}
  \item $f(x, \omega)$ is a Lebesgue-integrable function of $\omega$ for each $x \in X$.
  \item For almost all $\omega \in \Omega$, the derivative $\frac{\partial f(x, \omega)}{\partial x}$ exists for all $x \in X$.
  \item There is an integrable function $\Theta : \Omega \to \R$ such that $\Abs{\frac{\partial f(x, \omega)}{\partial x}} \le \Theta(\omega)$ for all $x \in X$.
\end{enumerate}

Then for all $x \in X$,
\[
  \frac{\ud}{\ud x} \int_\Omega f(x, \omega) \,\ud\omega
  = \int_\Omega \frac{\partial}{\partial x} f(x, \omega) \,\ud\omega
.\]
\end{theorem*}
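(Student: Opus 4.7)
The plan is to prove this form of the Leibniz rule by recognizing the derivative as a limit of difference quotients and pushing the limit inside the integral via the dominated convergence theorem (\acro{dct}). Fix $x_0 \in X$; since $X$ is open, there exists $\eta > 0$ such that $[x_0 - \eta, x_0 + \eta] \subset X$. Let $(h_n)_{n \ge 1}$ be any nonzero real sequence with $h_n \to 0$ and $\abs{h_n} < \eta$ for all $n$. By linearity of the Lebesgue integral (permitted by hypothesis 1), the finite-difference quotient of the parameter-dependent integral equals
\[
    \frac{1}{h_n}\left[ \int_\Omega f(x_0 + h_n, \omega) \,\ud\omega - \int_\Omega f(x_0, \omega) \,\ud\omega \right]
  = \int_\Omega g_n(\omega) \,\ud\omega,
\qquad
    g_n(\omega) := \frac{f(x_0 + h_n, \omega) - f(x_0, \omega)}{h_n}.
\]
The goal is thus to identify the limit of the right-hand side with $\int_\Omega \frac{\partial f}{\partial x}(x_0,\omega)\,\ud\omega$.

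The two ingredients \acro{dct} requires are pointwise convergence of $g_n$ and an integrable dominating function. Pointwise convergence is immediate: for almost every $\omega$ hypothesis 2 says the derivative $\frac{\partial f}{\partial x}(x_0, \omega)$ exists, and $g_n(\omega) \to \frac{\partial f}{\partial x}(x_0, \omega)$ by definition of the derivative. For the dominating function, I would apply the one-dimensional mean value theorem to the map $x \mapsto f(x, \omega)$ on the closed interval with endpoints $x_0$ and $x_0 + h_n$ (this interval lies in $X$ by choice of $\eta$, and differentiability on it is again by hypothesis 2). This yields some $\xi_n(\omega)$ strictly between $x_0$ and $x_0 + h_n$ with
\[
    g_n(\omega) = \frac{\partial f}{\partial x}\bigl(\xi_n(\omega), \omega\bigr).
\]
Hypothesis 3 then gives $\abs{g_n(\omega)} \le \Theta(\omega)$ for almost every $\omega$, uniformly in $n$, with $\Theta$ integrable.

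Now \acro{dct} applies: $\int_\Omega g_n(\omega)\,\ud\omega \to \int_\Omega \frac{\partial f}{\partial x}(x_0, \omega)\,\ud\omega$. Since the sequence $(h_n)$ was arbitrary (subject to $h_n \to 0$), this limit identity characterises the ordinary derivative at $x_0$, i.e.\
\[
    \frac{\ud}{\ud x}\bigg|_{x=x_0} \int_\Omega f(x, \omega)\,\ud\omega
    = \int_\Omega \frac{\partial f}{\partial x}(x_0, \omega)\,\ud\omega,
\]
and $x_0 \in X$ was arbitrary, completing the proof.

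The only genuinely delicate step is the pointwise bound on $g_n$: without it one cannot invoke \acro{dct}. The mean value theorem is the natural bridge, converting the difference quotient into an evaluation of $\partial f / \partial x$ at some interior point, which hypothesis 3 then controls by $\Theta$. A minor bookkeeping point is that the null sets where either $\partial f/\partial x$ fails to exist or where the bound by $\Theta$ fails may depend on $n$; taking the countable union of these null sets (one per $n$) yields a single null set outside of which everything works simultaneously, which is all \acro{dct} needs. The openness of $X$ is used exactly once, to guarantee that the difference quotient and the mean-value-theorem interval make sense for all sufficiently small $h_n$.
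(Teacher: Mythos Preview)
Your proof is correct and is the standard argument for differentiation under the integral sign: rewrite the derivative as a limit of difference quotients, use the mean value theorem to dominate each quotient by $\Theta$, and apply the dominated convergence theorem. One small remark on your bookkeeping comment: the null sets arising from hypotheses~2 and~3 are already independent of $n$ (hypothesis~2 gives a single null set off of which $\partial f/\partial x$ exists for \emph{all} $x\in X$, and hypothesis~3 is likewise uniform in $x$), so the countable-union step is unnecessary, though harmless.

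As for comparison with the paper: there is nothing to compare. The paper does not prove this theorem; it is quoted verbatim from an external reference (\textcite{leibnizrule}) and used as a black-box tool in \cref{sec:regularity-tilde} to justify the exchange $\E\nabla\tilde s = \nabla\E\tilde s$. Your argument is exactly the classical proof one would expect to find in the cited source.
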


Define the function $\tilde g_{x,y}^i(t, \omega) : \R \times \Omega \to \R$
by $\tilde g_{x,y}^i(t, \omega) = \tilde s_\omega(x + t e_i, y)$,
where $e_i$ is the $i$th standard basis vector,
and $\omega$ is the tuple of all the $\omega_i$ used in $\tilde z$.
$\tilde g_{x,y}^i(t, \cdot)$ is Lebesgue integrable in $\omega$, since
\[ \int \tilde g_{x,y}^i(t, \omega) \,\ud\omega = \E \tilde s(x + t e_i, y) = k(x + t e_i, y) < \infty. \]
For any $\omega \in \Omega$, $\frac{\partial}{\partial t} \tilde g_{x, y}^i(t, \omega)$ exists,
and satisfies:
\begin{align}
  \E_\omega \Abs{\frac{\partial}{\partial t} g_{x,y}^i(t, \omega)}
  % &= \E_\omega \Abs{ \frac{2}{D} \sum_{i=1}^{D/2} \tilde{z}_i(y)\tp \frac{\partial}{\partial t} \tilde{z}_i(x + t e_i) }
  &= \E_\omega \Abs{ \frac{2}{D} \sum_{j=1}^{D/2}
      \sin(\omega_j\tp y) \frac{\partial}{\partial t} \sin(\omega_j\tp x + t \omega_{ji})
    + \cos(\omega_j\tp y) \frac{\partial}{\partial t} \cos(\omega_j\tp x + t \omega_{ji})
    }
\\&= \E_\omega \Abs{ \frac{2}{D} \sum_{j=1}^{D/2}
      \omega_{ji} \sin(\omega_j\tp y) \cos(\omega_j\tp x + t \omega_{ji})
    - \omega_{ji} \cos(\omega_j\tp y) \sin(\omega_j\tp x + t \omega_{ji})
    }
\\&\le \E_\omega \left[ \frac{2}{D} \sum_{j=1}^{D/2}
      \Abs{ \omega_{ji} \sin(\omega_j\tp y) \cos(\omega_j\tp x + t \omega_{ji}) }
    + \Abs{ \omega_{ji} \cos(\omega_j\tp y) \sin(\omega_j\tp x + t \omega_{ji}) }
    \right]
\\&\le \E_\omega \left[ \frac{2}{D} \sum_{j=1}^{D/2} 2 \Abs{\omega_{ji}} \right]
\\&\le 2 \E_\omega \Abs{\omega}
,\end{align}
which is finite since the first moment of $\omega$ is assumed to exist.

Thus we have
$\frac{\partial}{\partial x_i} \E \tilde s(x, y) = \E \frac{\partial}{\partial x_i} \tilde s(x, y)$.
The same holds for $y$ by symmetry.
Combining the results for each component,
we get as desired that $\E \nabla_\Delta s(x, y) = \nabla_\Delta \E s(x, y)$.

\subsubsection{Lipschitz Constant} \label{sec:lip-tilde}
Since $\tilde f$ is differentiable,
$L_{\tilde f} = \Norm{\nabla \tilde f(\Delta^*)}$,
where $\Delta^* = \argmax_{\Delta \in \M_\Delta} \Norm{\nabla \tilde f(\Delta)}$.

Via Jensen's inequality,
$\E\Norm{\nabla \tilde s(\Delta)} \ge \Norm{\E \nabla \tilde s(\Delta)}$.
Now, letting $\Delta^* = x^* - y^*$:
\begin{align}
     \E[ L_{\tilde f}^2 ]
  &= \E\left[ \Norm{\nabla \tilde s(\Delta^*) - \nabla k(\Delta^*)}^2 \right]
\\&= \E_{\Delta^*}\!\Bigg[
     \E\left[ \Norm{\nabla \tilde s(\Delta^*)}^2 \right]
   - 2 \Norm{\nabla k(\Delta^*)} \E\big[ \Norm{\nabla \tilde s(\Delta^*)} \big]
   + \Norm{\nabla k(\Delta^*)}^2
   \Bigg]
\\&\le \E_{\Delta^*}\!\Bigg[
     \E\left[ \Norm{\nabla \tilde s(\Delta^*)}^2 \right]
   - 2 \Norm{\nabla k(\Delta^*)}^2
   + \Norm{\nabla k(\Delta^*)}^2
   \Bigg]
\\&= \E\left[ \Norm{\nabla \tilde s(\Delta^*)}^2 \right]
   - \E_{\Delta^*}\!\left[ \Norm{\nabla k(\Delta^*)}^2 \right]
\\&\le \E \Norm{\nabla \tilde s(\Delta^*)}^2
\\&= \E \Norm{ \nabla \tilde{z}(x^*)\tp \tilde{z}(y^*) }^2
\\&= \E \Norm{ \nabla \frac{1}{D/2} \sum_{i=1}^{D/2} \tilde{z}_i(x^*)\tp \tilde{z}_i(y^*) }^2
\\&= \E \Norm{ \nabla \tilde{z}_i(x^*)\tp \tilde{z}_i(y^*) }^2
\label{eq:lip-generic}
\\&= \E \Norm{ \nabla \cos(\omega\tp \Delta^*) }^2
\\&= \E \Norm{ - \sin(\omega\tp \Delta^*) \, \omega }^2
\\&= \E\left[ \sin^2(\omega\tp \Delta^*) \Norm{\omega}^2 \right]
\\&\le \E\left[ \Norm{\omega}^2 \right]
\label{eq:lip-sigma_p}
   = \sigma_p^2
.\end{align}

We can thus use Markov's inequality:
\begin{align}
  \Pr\left( L_{\tilde f} \ge \frac{\varepsilon}{2 r} \right)
  &= \Pr\left( L_{\tilde f}^2 \ge \left( \frac{\varepsilon}{2 r} \right)^2 \right)
   \le \sigma_p^2 \left( \frac{2 r}{\varepsilon} \right)^2
.\end{align}

\subsubsection{Anchor Points}
For any fixed $\Delta = x - y$,
$\tilde f(\Delta)$ is a mean of $D/2$ terms with expectation $k(x, y)$ bounded by $\pm 1$.
Applying Hoeffding's inequality and a union bound:
\[
      \Pr\left( \bigcup_{i=1}^T \abs{\tilde f(\Delta_i)} \ge \tfrac12 \varepsilon \right)
  \le T \Pr\left( \abs{\tilde f(\Delta)} \ge \tfrac12 \varepsilon \right)
  \le 2 T \exp\left( - \frac{2 \frac{D}{2} \left(\frac{\varepsilon}{2}\right)^2}{(1 - (-1))^2} \right)
  = 2 T \exp\left( - \frac{D \varepsilon^2}{16} \right)
.\]
Since we know the variance of each term from \cref{eq:var-s-tilde},
we could alternatively use Bernstein's inequality:
\begin{align}
  T \Pr\left( \Abs{\tilde f(\Delta)} > \tfrac12 \varepsilon \right)
  &\le 2 T \exp\left(
    - \frac{\frac{D}{2} \frac{\varepsilon^2}{4}}%
           {2 \Var[\cos(\omega\tp \Delta)] + \frac23 \varepsilon}
  \right)
   = 2 T \exp\left(
    - \frac{D \varepsilon^2}%
           {16 \left( \Var[\cos(\omega\tp \Delta)] + \frac13 \varepsilon \right)}
  \right)
.\end{align}
This is a better bound when
$\Var[\cos(\omega\tp \Delta)] + \frac13 \varepsilon < 1$;
for the RBF kernel, this is true whenever $\varepsilon < \tfrac32$,
and the improvement is bigger when $k(\Delta)$ is large or $\varepsilon$ is small.

To unify the two,
let $\alpha_\varepsilon := \min\left( 1, \max_{\Delta \in \M_\Delta} \tfrac12 + \tfrac12 k(2 \Delta) - k(\Delta^2) + \tfrac13 \varepsilon \right)$.
Then
\[
      \Pr\left( \bigcup_{i=1}^T \abs{\tilde f(\Delta_i)} \ge \tfrac12 \varepsilon \right)
  \le 2 T \exp\left( - \frac{D \varepsilon^2}{16 \alpha_\varepsilon} \right)
.\]

\subsubsection{Optimizing Over \texorpdfstring{$r$}{r}}
Combining these two bounds, we have a bound in terms of $r$:
\begin{align}
       \Pr\left( \sup_{\Delta \in \M_\Delta} \Abs{\tilde f(\Delta)} \le \varepsilon \right)
  % &\ge 1 - 2 \left( \frac{4 \diam(\M)}{r} \right)^{d} \exp\left( - \tfrac{1}{32} D \varepsilon^2 \right)
  %    - 8 \sigma_p^2 r^2 \varepsilon^{-2}
  &\ge 1 - \kappa_1 r^{-d} - \kappa_2 r^2
,\end{align}
letting
$\kappa_1 = 2 (4 \ell)^d \exp\left(- \frac{D \varepsilon^2}{16 \alpha_\varepsilon} \right)$,
$\kappa_2 = 4 \sigma_p^2 \varepsilon^{-2}$.

If we choose $r = (\kappa_1 / \kappa_2)^{1 / (d + 2)}$,
as did \textcite{rks},
the bound again becomes
$1 - 2 \kappa_1^{\frac{2}{d + 2}} \kappa_2^{\frac{d}{d + 2}}$.
But we could instead maximize the bound by choosing $r$ such that
$d \kappa_1 r^{-d-1} - 2 \kappa_2 r = 0$,
i.e.\ $r = \left( \frac{d \kappa_1}{2 \kappa_2} \right)^{\frac{1}{d+2}}$.
Then the bound becomes
$
% 1 - \kappa_1 \left( \frac{d \kappa_1}{2 \kappa_2} \right)^{\frac{-d}{d+2}}
% - \kappa_2 \left( \frac{d \kappa_1}{2 \kappa_2} \right)^{\frac{2}{d+2}}
% =
1 - \left( \left( \frac{d}{2} \right)^\frac{-d}{d+2} + \left(\frac{d}{2}\right)^\frac{2}{d+2} \right)
\kappa_1^\frac{2}{d+2} \kappa_2^\frac{d}{d+2}
% =
% 1 - (d + 2) 2^{-\frac{2}{d + 2}} d^{-\frac{d}{d + 2}} \kappa_1^\frac{2}{d + 2} \kappa_2^\frac{d}{d + 2}
$:
\begin{align}
       \Pr\left( \sup_{\Delta \in \M_\Delta} \Abs{\tilde f(\Delta)} > \varepsilon \right)
  &\le
    \left( \left( \tfrac{d}{2} \right)^\frac{-d}{d+2} + \left(\tfrac{d}{2}\right)^\frac{2}{d+2} \right)
    \left( 2 (4 \ell)^d \exp\left(-\frac{D \varepsilon^2}{16 \alpha_\varepsilon} \right) \right)^\frac{2}{d + 2}
    \left( 4 \sigma_p^2 \varepsilon^{-2} \right)^\frac{d}{d + 2}
\\&= 
    \left( \left( \tfrac{d}{2} \right)^\frac{-d}{d+2} + \left(\tfrac{d}{2}\right)^\frac{2}{d+2} \right)
    2^\frac{2+4d+2d}{d+2}
    \left( \frac{\sigma_p \ell}{\varepsilon} \right)^\frac{2 d}{d + 2}
    \exp\left( - \frac{D \varepsilon^2}{8 (d + 2) \alpha_\varepsilon} \right)
\label{eq:bound-tilde-tight}
\\&=
    \left( \left( \tfrac{d}{2} \right)^\frac{-d}{d+2} + \left(\tfrac{d}{2}\right)^\frac{2}{d+2} \right)
    2^\frac{6d+2}{d+2}
    \left( \frac{\sigma_p \ell}{\varepsilon} \right)^\frac{2}{1 + 2/d}
    \exp\left( - \frac{D \varepsilon^2}{8 (d + 2) \alpha_\varepsilon} \right)
\label{eq:bound-tilde}
.\end{align}

For $\varepsilon \le \sigma_p \ell$, we can loosen the exponent on the middle term to 2,
though in low dimensions we have a somewhat sharper bound.
We no longer need the $\ell > 1$ assumption of the original proof.

To prove the final statement of \cref{thm:uni:tilde},
simply set \cref{eq:bound-tilde-tight} to be at most $\delta$ and solve for $D$.

\subsection{PROOF OF PROPOSITION \ref{thm:uni:breve}} \label{proof:uni:breve}
We will follow the proof strategy of \cref{thm:uni:tilde} as closely as possible.

Our approximation is now
$\breve s(x, y) = \breve{z}(x)\tp \breve{z}(y)$,
and the error is $\breve f(x, y) = \breve s(x, y) - k(y, x)$.
Note that $\breve s$ and $\breve f$ are not shift-invariant:
for example, with $D = 1$,
$\breve s(x, y) = \cos(\omega\tp\Delta) + \cos(\omega\tp(x+y) + 2b)$
but $\breve s(\Delta, 0) = \cos(\omega\tp\Delta) + \cos(\omega\tp\Delta + 2b)$.

Let $q = \begin{bmatrix}x \\ y\end{bmatrix} \in \X^{2}$
denote the argument to these functions.
$\X^2$ is a compact set in $\R^{2d}$ with diameter $\sqrt{2} \ell$,
% $\Norm{(x, y) - (x', y')}
% = \sqrt{\Norm{x - x'}^2 + \Norm{y - y'}^2}
% \le \sqrt{\diam(\M)^2 + \diam(\M)^2}
% = \sqrt{2} \diam(\M)$
so we can cover it with an $\varepsilon$-net
using at most
$T
% = \left( \frac{4 \frac{\sqrt{2}}{2} \diam(\M)}{r} \right)^{2d}
= \left( 2 \sqrt{2} \ell / r \right)^{2d}$
balls of radius $r$.
Let $\{ q_i \}_{i=1}^T$ denote their centers,
and $L_f$ be the Lipschitz constant of $f : \R^{2d} \to \R$.

\subsubsection{Regularity Condition} \label{sec:regularity-breve}
To show $\E \nabla \breve s(q) = \nabla \E \breve s(q)$,
we can define $\breve g_{x,y}^i(t, \omega)$ analogously to in \cref{sec:regularity-tilde},
where here $\omega$ contains all the $\omega_i$ and $b_i$ variables used in $\breve z$.
We then have:
\begin{align}
  \E_\omega \Abs{\frac{\partial \breve g_{x,y}^i(t, \omega)}{\partial t}}
  % &= \E_\omega \Abs{ \frac{1}{D} \sum_{i=1}^{D} \breve{z}_i(y) \frac{\partial}{\partial t} \breve{z}_i(x + t e_i) }
  % &= \E_\omega \Abs{ \frac{1}{D} \sum_{j=1}^{D}
  %     \cos(\omega_j\tp y + b_j) \frac{\partial}{\partial t} \cos(\omega_j\tp x + t \omega_{ji} + b_j)
  %   }
  &= \E_\omega \Abs{ \frac{1}{D} \sum_{j=1}^{D}
      - \omega_{ji} \cos(\omega_j\tp y + b_j) \sin(\omega_j\tp x + t \omega_{ji} + b_j)
    }
   \le \E_\omega \left[ \frac{1}{D} \sum_{j=1}^{D} \Abs{\omega_{ji}} \right]
   \le \E_\omega \Abs{\omega}
,\end{align}
which we have assumed to be finite.

\subsubsection{Lipschitz Constant} \label{sec:lip-breve}
The argument follows that of \cref{sec:lip-tilde} up to \cref{eq:lip-generic},
using $q^*$ in place of $\Delta^*$.
Then:
\begin{align}
     \E[L_{\breve f}^2]
  &\le \E \Norm{\nabla \breve{s}(q^*)}^2
\\&= \E \Norm{ \nabla_q \left( 2 \cos(\omega\tp x + b) \cos(\omega\tp y + b) \right) }^2
\\&= \E\left[
    \Norm{\nabla_x \left( 2 \cos(\omega\tp x + b) \cos(\omega\tp y + b) \right)}^2
  + \Norm{\nabla_y \left( 2 \cos(\omega\tp x + b) \cos(\omega\tp y + b) \right)}^2
  \right]
\\&= \E \left[
     \Norm{-2 \sin(\omega\tp x^* + b) \cos(\omega\tp y^* + b) \,\omega}^2
   + \Norm{-2 \cos(\omega\tp x^* + b) \sin(\omega\tp y^* + b) \,\omega}^2
   \right]
\\&= \E\left[
    4 \left(
      \sin^2(\omega\tp x^* + b) \cos^2(\omega\tp y^* + b)
    + \cos^2(\omega\tp x^* + b) \sin^2(\omega\tp y^* + b)
    \right)
    \Norm{\omega}^2
  \right]
\\&= \E_\omega\left[
    \E_b\left[ 2 - \cos(2 \omega\tp (x^* - y^*)) - \cos(2 \omega\tp (x^* + y^*) + 4 b) \right]
    \Norm{\omega}^2
  \right]
\\&= \E_\omega\left[
    \left( 2 - \cos(2 \omega\tp (x^* - y^*)) \right)
    \Norm{\omega}^2
  \right]
\\&\le 3 \E\Norm\omega^2 = 3 \sigma_p^2
\label{eq:lip-breve-last}
.\end{align}

Following through with Markov's inequality:
\[
\Pr\left( L_{\breve f} \ge \varepsilon / (2 r) \right)
 \le 3 \sigma_p^2 (2 r / \varepsilon)^2
 = 12 (\sigma_p r / \varepsilon)^2
.\]

\subsubsection{Anchor Points}
For any fixed $x, y$,
$\breve s$ takes a mean of $D$ terms with expectation $k(x, y)$ bounded by $\pm 2$.
Using Hoeffding's inequality:
\[
      \Pr\left( \bigcup_{i=1}^T \abs{\breve f(q_i)} \ge \tfrac12 \varepsilon \right)
  \le T \Pr\left( \abs{\breve f(q)} \ge \tfrac12 \varepsilon \right)
  \le 2 T \exp\left( - \frac{2 D \left(\frac{\varepsilon}{2}\right)^2}{(2 - (-2))^2} \right)
  = 2 T \exp\left( - \frac{D \varepsilon^2}{32} \right)
.\]
Since the variance of each term is given by \cref{eq:var-s-breve},
we can instead use Bernstein's inequality:
\begin{align}
  T \Pr\left( \Abs{\breve f(\Delta)} > \tfrac12 \varepsilon \right)
  &\le 2 T \exp\left(
    - \frac{D \frac{\varepsilon^2}{4}}%
           {2 \left( \Var[\cos(\omega\tp \Delta)] + \frac12 \right) + \frac43 \varepsilon}
  \right)
   = 2 T \exp\left(
    - \frac{D \varepsilon^2}%
           {4 + 8 \Var[\cos(\omega\tp \Delta)] + \frac{16}{3} \varepsilon}
  \right)
% \\&= 2 T \exp\left(
%     - \frac{D \varepsilon^2}%
%            {32 \left( \tfrac18 + \tfrac14 \Var[\cos(\omega\tp \Delta)] + \frac16 \varepsilon \right)}
%   \right)
.\end{align}
Thus Bernstein's gives us a tighter bound if
\begin{gather}
   4 + 8 \Var[\cos(\omega\tp \Delta)] + \frac{16}{3} \varepsilon < 32
\qquad\text{i.e.}\qquad
   2 \Var[\cos(\omega\tp \Delta)] + \frac43 \varepsilon < 7
.\end{gather}

To unify the bounds, define
$\alpha'_\varepsilon = \min\left(1, \max_{\Delta} \tfrac18 + \tfrac14 \Var[\cos(\omega\tp \Delta)] + \frac16 \varepsilon\right)$;
then
\[
      \Pr\left( \bigcup_{i=1}^T \abs{\breve f(q_i)} \ge \tfrac12 \varepsilon \right)
  \le 2 T \exp\left( - \frac{D \varepsilon^2}{32 \alpha'_\varepsilon} \right)
.\]

\subsubsection{Optimizing Over \texorpdfstring{$r$}{}}
Our bound is now of the form
\begin{align}
  \Pr\left( \sup_{q \in \M^2} \Abs{\breve f(q)} \le \varepsilon \right)
  \ge 1 - \kappa_1 r^{-2d} - \kappa_2 r^2
,\end{align}
with $\kappa_1 = 2 \left( 2 \sqrt{2} \ell \right)^{2d} \exp\left( - \frac{D\varepsilon^2}{32 \alpha'_\varepsilon} \right)$
and $\kappa_2 = 12 \sigma_p^2 \varepsilon^{-2}$.

This is maximized by $r$ when
$2 d \kappa_1 r^{-2d-1} - 2 \kappa_2 r = 0$,
i.e.\ $r = \left( \frac{d \kappa_1}{\kappa_2} \right)^{\frac{1}{2d+2}}$.
Substituting that value of $r$ into the bound yields
$
% 1
%   - \kappa_1 \left( \frac{d \kappa_1}{\kappa_2} \right)^{\frac{-2d}{2d+2}}
%   - \kappa_2 \left( \frac{d \kappa_1}{\kappa_2} \right)^{\frac{2}{2d+2}}
% =
1 - \left( d^\frac{-d}{d+1} + d^\frac{1}{d+1} \right) \kappa_1^\frac{1}{d+1} \kappa_2^\frac{d}{d+1}
$, and thus:
\begin{align}
     \Pr\left( \sup_{q \in \M^2} \Abs{\breve f(q)} > \varepsilon \right)
  &\le
    \left( d^\frac{-d}{d+1} + d^\frac{1}{d+1} \right)
    \left( 2 \left( 2 \sqrt{2} \ell \right)^{2d} \exp\left( - \frac{D\varepsilon^2}{32 \alpha'_\varepsilon} \right) \right)^\frac{1}{d+1}
    \left( 12 \sigma_p^2 \varepsilon^{-2} \right)^\frac{d}{d+1}
\\&=
    \left( d^\frac{-d}{d+1} + d^\frac{1}{d+1} \right)
    2^\frac{1 + 2d + d + 2d}{d+1}
    3^\frac{d}{d+1}
    \left( \frac{\sigma_p \ell}{\varepsilon} \right)^\frac{2d}{d+1}
    \exp\left( - \frac{D \varepsilon^2}{32 (d + 1) \alpha'_\varepsilon} \right)
\label{eq:bound-breve-tight}
\\&=
    \left( d^\frac{-d}{d+1} + d^\frac{1}{d+1} \right)
    2^\frac{5d + 1}{d+1}
    3^\frac{d}{d+1}
    \left( \frac{\sigma_p \ell}{\varepsilon} \right)^\frac{2}{1+1/d}
    \exp\left( - \frac{D \varepsilon^2}{32 (d + 1) \alpha'_\varepsilon} \right)
\label{eq:bound-breve}
.\end{align}

As before, when $\varepsilon \le \sigma_p \ell$
we can loosen the exponent on the middle term to 2;
it is slightly worse than the corresponding exponent of \cref{eq:bound-tilde}
for small $d$.

To prove the final statement of \cref{thm:uni:breve},
set \cref{eq:bound-breve-tight} to be at most $\delta$ and solve for $D$.

%%%%%%%%%%%%%%%%%%%%%%%%%%%%%%%%%%%%%%%%%%%%%%%%%%%%%%%%%%%%%%%%%%%%%%%%%%%%%%%%
\subsection{PROOF OF PROPOSITION \ref{thm:exp-sup-tilde}} \label{proof:exp:tilde}
Consider the $\tilde z$ features,
and recall that we supposed $k$ is $L$-Lipschitz over
$\X_\Delta := \{ x - y \mid x, y \in \X \}$.

Our primary tool will be the following slight generalization of Dudley's entropy integral,
which is a special case of Lemma~13.1 of \textcite{boucheron}.
(The only difference from their Corollary~13.2 is that we maintain the variance factor $v$.)
\begin{theorem*}[{\cite{boucheron}}]
  Let $\T$ be a finite pseudometric space
  and let $(X_t)_{t \in \T}$ be a collection of random variables
  such that for some constant $v > 0$,
  \[
    \log \E e^{\lambda (X_t - X_{t'})}
    \le \frac{1}{2} v \lambda^2 d^2(t, t')
  \]
  for all $t, t' \in \T$ and all $\lambda > 0$.
  Let $\delta = \sup_{t \in \T} d(t, t_0)$.
  Then, for any $t_0 \in \T$,
  \[
    \E\left[ \sup_{t \in \T} X_t - X_{t_0} \right]
    \le 12 \sqrt{v} \int_0^{\delta/2} \sqrt{H(u, \T)} \,\ud{u}
  .\]
\end{theorem*}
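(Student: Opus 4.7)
The plan is to prove the bound by the classical chaining argument applied to a sequence of successively finer $\varepsilon$-nets of $\T$, combined with the sub-Gaussian maximal inequality. First I would record the tool I need: if $Y_1, \dots, Y_N$ are (possibly dependent) centered random variables with $\log \E e^{\lambda Y_i} \le \lambda^2 \sigma_i^2 / 2$ for all $\lambda$, then $\E \max_i Y_i \le (\max_i \sigma_i) \sqrt{2 \log N}$. This is a standard Chernoff-then-union-bound argument. The hypothesis of the theorem immediately implies this sub-Gaussian condition for each increment $X_t - X_{t'}$ with variance proxy $v \, d^2(t,t')$, and differentiating the MGF bound at $\lambda = 0$ confirms $\E[X_t - X_{t'}] = 0$.

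Next I would set up the chain. For $k \ge 1$ let $\T_k \subseteq \T$ be a minimal $\delta 2^{-k}$-net of $\T$, with $\T_0 = \{t_0\}$; for each $t \in \T$, let $\pi_k(t) \in \T_k$ be its nearest point. Since $\T$ is finite there exists $K$ with $\pi_K(t) = t$ for all $t$, so the telescoping identity $X_t - X_{t_0} = \sum_{k=1}^{K} (X_{\pi_k(t)} - X_{\pi_{k-1}(t)})$ holds. Taking the supremum over $t$ and then expectation gives
\[
    \E \sup_{t \in \T} (X_t - X_{t_0})
    \le \sum_{k=1}^K \E \max_{t \in \T} \left( X_{\pi_k(t)} - X_{\pi_{k-1}(t)} \right).
\]
For each $k$, the triangle inequality bounds $d(\pi_k(t), \pi_{k-1}(t)) \le \delta 2^{-k} + \delta 2^{-(k-1)} = 3 \delta 2^{-k}$, and there are at most $\lvert\T_k\rvert \cdot \lvert\T_{k-1}\rvert \le \lvert\T_k\rvert^2$ distinct pairs. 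Applying the maximal inequality with variance proxy at most $v (3\delta 2^{-k})^2$ to these pairs yields $\E \max \le 3\delta 2^{-k} \sqrt{v} \cdot \sqrt{2 \log \lvert\T_k\rvert^2} = 6 \delta 2^{-k} \sqrt{v\, H(\delta 2^{-k}, \T)}$.

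Finally I would convert the sum into the entropy integral. Because $u \mapsto \sqrt{H(u, \T)}$ is nonincreasing, the Riemann-sum comparison $\delta 2^{-k} \sqrt{H(\delta 2^{-k}, \T)} \le 2 \int_{\delta 2^{-k-1}}^{\delta 2^{-k}} \sqrt{H(u, \T)} \,\ud u$ telescopes to $\sum_k \delta 2^{-k} \sqrt{H(\delta 2^{-k}, \T)} \le 2 \int_0^{\delta/2} \sqrt{H(u, \T)} \,\ud u$, producing the stated constant $6 \cdot 2 = 12$. The main obstacle is really just bookkeeping: tracking the constants from (a) the sub-Gaussian maximal inequality, (b) the factor of $3$ from the triangle inequality at each scale, and (c) the factor of $2$ from the Riemann-sum comparison, so that they combine exactly. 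There are no analytic subtleties beyond the fact that finiteness of $\T$ lets us terminate the chain at some finite $K$; for general (separable) $\T$ one would handle the limit by a standard truncation or continuity argument, but that is not needed under the hypotheses as stated.
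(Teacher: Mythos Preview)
The paper does not prove this theorem; it quotes it directly from \textcite{boucheron} (as a special case of their Lemma~13.1, essentially their Corollary~13.2 with the variance factor $v$ retained) and then applies it. Your proposal is a correct rendition of the standard chaining proof that appears in that reference: dyadic nets $\T_k$, telescoping along the chain, the sub-Gaussian maximal inequality at each scale, and the Riemann-sum comparison to produce the entropy integral with constant $12$. Two minor remarks: the maximal inequality $\E\max_i Y_i \le \sigma\sqrt{2\log N}$ only needs the MGF bound for $\lambda>0$, which is exactly what the hypothesis provides (so your aside about ``for all $\lambda$'' and differentiating at $0$ is unnecessary); and your use of covering nets is compatible with the packing-number entropy $H$ because any maximal $\varepsilon$-packing is an $\varepsilon$-net, so $\log\lvert\T_k\rvert \le H(\delta 2^{-k},\T)$ as you implicitly use.
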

Note that, although stated for finite pseudometric spaces,
the result is extensible to seperable pseudometric spaces (such as $\X_\Delta$) by standard arguments.
Here $H(\delta, \T) = \log N(\delta, \T)$, where $N$ is the $\delta$-packing number,
is known as the $\delta$-entropy number.
It is the case that the $\delta$-packing number
is at most the $\frac{\delta}{2}$-covering number,
which Proposition~5 of \textcite{cucker:foundations} bounds.
Thus,
picking $\Delta_0 = 0$ gives $\delta = \ell$,
$H(\delta, \X_\Delta) \le d \log\left( {8 \ell}/{\delta} \right)$,
and
\begin{align}
    \int_0^{\ell / 2}\! \sqrt{H(u, \X_\Delta)} \,\ud u
  &\le \int_0^{\ell / 2}\! \sqrt{d \log(8 \ell / u)} \,\ud u
   = \gamma \ell \sqrt{d}
,\end{align}
where $\gamma := 4 \sqrt{\pi} \erfc(2 \sqrt{\log 2}) + \sqrt{\log 2} \approx 0.964$.

Now,
$
  \frac{2}{D} \left( \cos(\omega_i\tp \Delta) - k(\Delta) - \cos(\omega_i\tp\Delta') + k(\Delta') \right)
$
has mean zero,
and absolute value bounded by
\begin{align}
  \Abs{ \frac{2}{D} \left( \cos(\omega_i\tp \Delta) - k(\Delta) - \cos(\omega_i\tp\Delta') + k(\Delta') \right)}
  &\le \frac{2}{D} \left(
      \Abs{\cos(\omega_i\tp\Delta) - \cos(\omega_i\tp\Delta')}
    + \Abs{k(\Delta) - k(\Delta')}
  \right)
\\&\le \frac2D \left(
      \Abs{\omega_i\tp\Delta - \omega_i\tp\Delta'}
    + L \Norm{\Delta - \Delta'}
  \right)
\\&\le \frac2D \left( \norm{\omega_i} + L \right) \norm{\Delta - \Delta'}
\label{proof:exp:tilde:absbound}
.\end{align}
Thus,
via Hoeffding's lemma \parencite[Lemma~2.2]{boucheron},
each such term has log moment generating function at most
$\frac{2}{D^2} (\norm{\omega_i} + L)^2 \lambda^2 \norm{\Delta - \Delta'}^2$.

This is almost in the form required by Dudley's entropy integral,
except that $\omega_i$ is a random variable.
Thus, for any $r > 0$, define the random process $\tilde g_r$
which is distributed as $\tilde f$
except we require that $\norm{\omega_1} = r$
and $\norm{\omega_i} \le r$ for all $i > 1$.
Since log mgfs of independent variables are additive, we thus have
\begin{align}
  \log \E e^{ \lambda (\tilde g_r(\Delta) - \tilde g_r(\Delta'))}
  &\le
  \frac{1}{D}
  \left( \frac{2}{D} \sum_{i=1}^{D/2} (\norm{\omega_i} + L)^2 \right)
  \lambda^2
  \norm{\Delta - \Delta'}^2
   \le
  \frac{1}{D}
  (r + L)^2
  \lambda^2
  \norm{\Delta - \Delta'}^2
.\end{align}

$\tilde g_r$ satisfies the conditions of the theorem with $v = \frac1D (r + L)^2$.
Now, $\tilde g_r(0) = 0$,
so we have
\[
  \E\left[ \sup_{\Delta \in \X_\Delta} \tilde g_r(\Delta) \right]
  \le \frac{12 \gamma \sqrt{d} \ell}{\sqrt{D}} (r + L)
.\]

But the distribution of $\tilde f$ conditioned on the event $\max_i \norm{\omega_i} = r$
is the same as the distribution of $\tilde g_r$.
Thus
\[
  \E \sup \tilde f
  = \E_r \left[ \E[ \sup \tilde g_r ] \right]
  \le \E_r \left[ \frac{12 \gamma \sqrt{d} \ell}{\sqrt{D}} (r + L) \right]
  = \frac{12 \gamma \sqrt{d} \ell}{\sqrt{D}} (R + L)
\]
where $R := \E \max_{i = 1}^{D / 2} \norm{\omega_i}$.

The same holds for $\E \sup( - \tilde f )$.
Since we have
$\sup \tilde f \ge 0$,
$\sup( - \tilde f ) \ge 0$,
the claim follows from
$\E \left[ \max( \sup \tilde f, \sup(- \tilde f)) \right]
\le \E\left[ \sup \tilde f + \sup( -\tilde f) \right]$.

\subsection{PROOF OF PROPOSITION \ref{thm:exp-sup-breve}} \label{proof:exp:breve}

For the $\breve z$ features,
the error process again must be defined over $\X^2$ due to the non-shift invariant noise.
We still assume that $k$ is $L$-Lipschitz over $\X_\Delta$, however.

Compared to the argument of \cref{proof:exp:tilde},
we have $H(u, \X^2) \le 2 d \log\left( 4 \sqrt{2} \ell / u \right)$.
Unlike $\X_\Delta$, however,
$\X^2$ does not necessarily contain an obvious point $q_0$ to minimize
$\sup_{q \in \X^2} d(q, q_0)$,
nor an obvious minimal value.
We rather consider the ``radius''
$\rho := \sup_{x \in \X} d(x, x_0)$,
achieved by any convenient point $x_0$;
then $\sup_{q \in \X^2} d\left(q, (x_0, x_0) \right) = \sqrt{2} \rho$.
Note that $\tfrac12 \ell \le \rho \le \ell$,
where the lower bound is achieved by $\X$ a ball,
and the upper bound by $\X$ a sphere.
The integral in the bound is then
\begin{align}
  \int_0^{\rho/\sqrt{2}} \sqrt{H(u, \X^2)}
  &\le \int_0^{\rho / \sqrt 2} \sqrt{2 d \log( 4 \sqrt{2} \ell / u )}
  % = \gamma' \ell \sqrt{d}
\\&= 4 \sqrt{\pi d} \ell
     \erfc\left( \sqrt{ \tfrac12 \log 2 + \log 4 \sqrt{2} \tfrac{\ell}{\rho} } \right)
   + \rho \sqrt{d} \sqrt{ \tfrac52 \log 2 + \log \tfrac{\ell}{\rho}}
\\&=
    \left(
      4 \sqrt{\pi} \erfc\left( \sqrt{ \tfrac12 \log 2 + \log 4 \sqrt{2} \tfrac{\ell}{\rho} } \right)
    + \tfrac{\rho}{\ell} \sqrt{ \tfrac52 \log 2 + \log \tfrac{\ell}{\rho}}
    \right)
    \ell \sqrt{d}
  \label{proof:exp:breve:entropy-rate}
.\end{align}
Calling the term in parentheses $\gamma'_{\ell / \rho}$,
we have that $\gamma'_1 \approx 1.541$, $\gamma'_2 \approx 0.803$,
and it decreases monotonically in between, as shown in \cref{fig:gamma-prime}.

\begin{figure}[hbt]
  \centering
 % coeff[\[Beta]_] := 
 % 4 Sqrt[\[Pi]] Erfc[Sqrt[1/2 Log[2] + Log[4] Sqrt[2] \[Beta]]] + 
 %  1/\[Beta] Sqrt[5/2 Log[2] + Log[\[Beta]]]
 % Plot[coeff[ratio], {ratio, 1, 2}, 
 % AxesLabel -> {HoldForm[diameter/radius], HoldForm[\[Gamma]']}, 
 % PlotLabel -> None, LabelStyle -> {GrayLevel[0]}]
  \includegraphics[width=.45\textwidth]{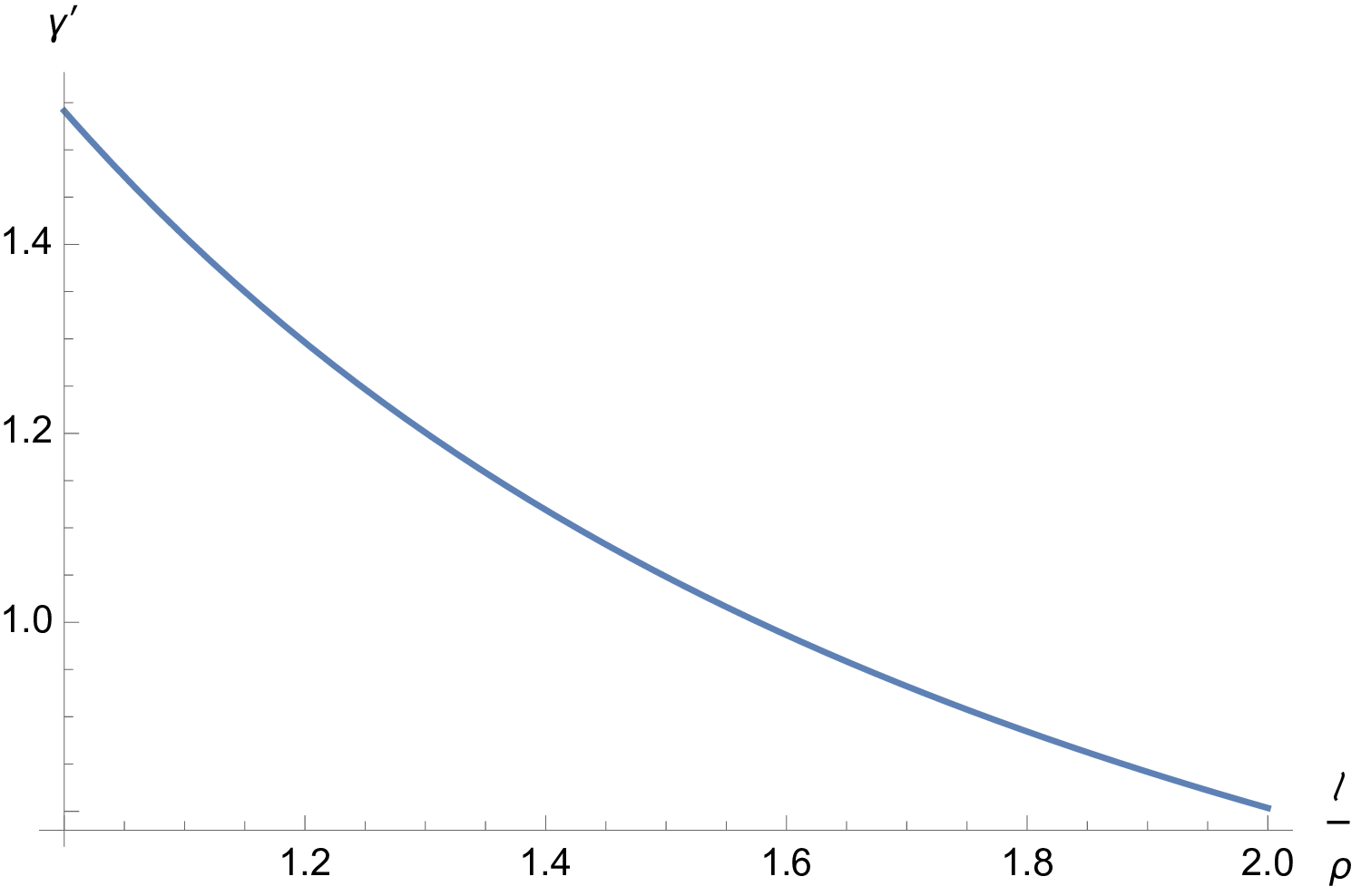}
  \caption{
    The coefficient of \cref{proof:exp:breve:entropy-rate}
    as a function of $\ell / \rho$.
  }
  \label{fig:gamma-prime}
\end{figure}

We will again use the notation of $q = (x, y) \in \X^2$, $\Delta = x - y$, $t = x + y$.
Each term in the sum of $\breve f(q) - \breve f(q')$ has mean zero and absolute value at most
\begin{align}
  \frac1D \lvert
      \cos(\omega_i\tp \Delta) &+ \cos(\omega_i\tp t + 2 b_i) - k(\Delta)
    - \cos(\omega_i\tp \Delta') + \cos(\omega_i\tp t' + 2 b_i) + k(\Delta')
  \rvert
\\&\le \frac1D \left(
      \Abs{\cos(\omega_i\tp \Delta) - \cos(\omega_i\tp \Delta')}
    + \Abs{\cos(\omega_i\tp t + 2 b_i) - \cos(\omega_i\tp t' + 2 b_i)}
    + \Abs{k(\Delta) - k(\Delta')}
  \right)
\\&\le \frac1D \left(
      \norm{\omega_i} \norm{\Delta - \Delta'}
    + \norm{\omega_i} \norm{t - t'}
    + L \norm{\Delta - \Delta'}
  \right)
.\end{align}
Now, in order to cast this in terms of distance on $\X^2$,
let $\delta_x = x - x'$, $\delta_y = y - y'$.
Then
\begin{align}
  \norm{q - q'}^2
  &= \norm{\delta_x}^2 + \norm{\delta_y}^2
  \\
  \left( \norm{\Delta - \Delta'} + \norm{t - t'} \right)^2
  &= \left(
    \sqrt{\norm{\delta_x}^2 + \norm{\delta_y}^2 - 2 \delta_x\tp\delta_y}
  + \sqrt{\norm{\delta_x}^2 + \norm{\delta_y}^2 + 2 \delta_x\tp\delta_y}
  \right)^2
\\&= 2 \norm{\delta_x}^2 + 2 \norm{\delta_y}^2
   + 2 \sqrt{
      \left( \norm{\delta_x}^2 + \norm{\delta_y}^2 \right)^2
      - 4 (\delta_x\tp\delta_y)^2
    }
\\&\le 4 \left( \norm{\delta_x}^2 + \norm{\delta_y}^2 \right)
\\
  \norm{\Delta - \Delta'} + \norm{t - t'}
  &\le 2 \norm{q - q'}
\\
  \norm{\Delta - \Delta'}
  &\le 2 \norm{q - q'}
\end{align}
and so each term in the sum of $\breve f(q) - \breve f(q')$ has absolute value at most
$
  \frac2D \left( \norm{\omega_i} + L \right) \norm{q - q'}
$.
Note that this agrees exactly with \cref{proof:exp:tilde:absbound},
but the sum in $\breve f(q) - \breve f(q')$ has $D$ terms rather than $D / 2$.
Defining $\breve g_r$ analogously to $\tilde g_r$,
we thus get that
\[
  \log \E e^{\lambda (\breve g_r(q) - \breve g_r(q'))}
  \le \frac{2}{D} \left( \frac{1}{D} \sum_{i=1}^{D} \left( \norm{\omega_i} + L \right)^2 \right) \lambda^2 \norm{q - q'}^2
  \le \frac{2}{D} (r + L)^2 \lambda^2 \norm{q - q'}^2
,\]
and the conditions of the theorem hold with $v = \frac{4}{D} (r + L)^2$.
Note that $\E \breve g_r(q_0) = 0$.
Carrying out the rest of the argument, we get that
\[
  \E \sup \breve f
  = \E_r[ \E[ \sup \breve g_r] ]
  \le \E_r\left[ \frac{24 \beta_{\ell / \rho} \ell \sqrt{d}}{\sqrt{D}} (r + L) \right]
  = \frac{24 \beta_{\ell / \rho} \ell \sqrt{d}}{\sqrt{D}} (R + L)
,\]
and similarly for $\E \sup \breve f$.
We do not have a guarantee that $\breve f(q)$ does not have a consistent sign,
and so our bound becomes
\begin{align}
  \E \norm{\breve f}_\infty
  &\le \E\left[ \norm{\breve f}_\infty \mid \breve f \text{ crosses } 0 \right]
      \Pr\left( \breve f \text{ crosses } 0 \right)
    + 3 \Pr\left( \breve f \text{ does not cross } 0 \right)
\\&\le \frac{48 \beta_{\ell / \rho} \ell \sqrt{d}}{\sqrt{D}} (R + L)
      \Pr\left( \breve f \text{ crosses } 0 \right)
    + 3 \Pr\left( \breve f \text{ does not cross } 0 \right)
.\end{align}
$\Pr\left( \breve f \text{ crosses } 0 \right)$
is extremely close to 1 in ``usual'' situations.

\section{PROOFS FOR \texorpdfstring{$L_2$}{L2} ERROR BOUND (SECTION \ref{sec:error:l2})} \label{proof:l2}

\subsection{BOUNDED CHANGE IN \texorpdfstring{$\norm{\tilde f}_\mu^2$}{L2 NORM OF SHIFT-INVARIANT ERROR}} \label{proof:l2:diff:tilde}

Viewing $\norm{\tilde f}_\mu^2$ as a function of $\omega_1, \dots, \omega_{D/2}$,
we can bound the change due to replacing $\omega_1$ by $\hat \omega_1$.
(By symmetry, the change is the same for any $\omega_i$.)
\begin{align}
       \Big\lvert
         \norm{\tilde f}_\mu^2&(\omega_1, \omega_2, \dots, \omega_{D/2})
       - \norm{\tilde f}_\mu^2(\hat\omega_1, \omega_2, \dots, \omega_{D/2})
       \Big\rvert
\\& =  \left\lvert
         \int_{\X^2} \left(
             \frac{2}{D} \cos(\omega_1\tp (x - y))
           + \frac{2}{D} \sum_{i=2}^{D/2} \cos(\omega_i\tp (x - y))
           - k(x, y)
           \right)^2 \ud\mu(x, y)
\right.\\&\quad\left.
       - \int_{\X^2} \left(
             \frac{2}{D} \cos(\hat\omega_1\tp (x - y))
           + \frac{2}{D} \sum_{i=2}^{D/2} \cos(\omega_i\tp (x - y))
           - k(x, y)
           \right)^2 \ud\mu(x, y)
       \right\rvert
\\& =  \left\lvert
         \frac{4}{D^2} \int_{\X^2} \cos^2(\omega_1\tp (x - y)) \ud\mu(x, y)
       + \int_{\X^2} \left(
             \frac{2}{D} \sum_{i=2}^{D/2} \cos(\omega_i\tp (x - y))
           - k(x, y)
           \right)^2 \ud\mu(x, y)
\right.\\&\quad\left.
       + 2 \int_{\X^2}
           \frac{2}{D} \cos(\omega_1\tp (x - y))
           \left(
             \frac{2}{D} \sum_{i=2}^{D/2} \cos(\omega_i\tp (x - y))
           - k(x, y)
           \right)
           \ud\mu(x, y)
\right.\\&\quad\left.
       - \frac{4}{D^2} \int_{\X^2} \cos^2(\hat\omega_1\tp (x - y)) \ud\mu(x, y)
       - \int_{\X^2} \left(
             \frac{2}{D} \sum_{i=2}^{D/2} \cos(\omega_i\tp (x - y))
           - k(x, y)
           \right)^2 \ud\mu(x, y)
\right.\\&\quad\left.
       - 2 \int_{\X^2}
           \frac{2}{D} \cos(\hat\omega_1\tp (x - y))
           \left(
             \frac{2}{D} \sum_{i=2}^{D/2} \cos(\omega_i\tp (x - y))
           - k(x, y)
           \right)
           \ud\mu(x, y)
       \right\rvert
\\& =  \left\lvert
         \frac{4}{D^2} \int_{\X^2} \left( \cos^2(\omega_1\tp (x - y)) - \cos^2(\hat\omega_1\tp (x - y)) \right) \ud\mu(x, y)
\right.\\&\quad\left.
       + \frac{4}{D} \int_{\X^2}
           \left( \cos(\omega_1\tp (x - y)) - \cos(\hat\omega_1\tp (x - y)) \right)
           \left(
               \frac{2}{D} \sum_{i=2}^{D/2} \cos(\omega_i\tp (x - y))
             - k(x, y)
           \right)
           \ud\mu(x, y)
       \right\rvert
\\&\le \frac{4}{D^2} \int_{\X^2} \ud\mu(x, y)
     + \frac{4}{D}   \int_{\X^2} 4 \ud\mu(x, y)
\\& =  \left( \frac{4}{D^2} + \frac{16}{D} \right) \mu(\X^2)
    =  \frac{16 D + 4}{D^2} \mu(\X^2)
.\end{align}

\subsection{BOUNDED CHANGE IN \texorpdfstring{$\norm{\breve f}_\mu^2$}{L2 NORM OF PHASE-SHIFT ERROR}} \label{proof:l2:diff:breve}
We can do essentially the same thing for $\breve f$:
\begin{align}
       \Big\lvert
         \norm{\breve f}_\mu^2&(\omega_1, \omega_2, \dots, \omega_{D/2})
       - \norm{\breve f}_\mu^2(\hat\omega_1, \omega_2, \dots, \omega_{D/2})
       \Big\rvert
\\& =  \left\lvert
         \int_{\X^2} \left(
             \frac{2}{D} \left( \cos(\omega_1\tp (x - y)) + \cos(\omega_1\tp (x + y) + 2 b_i) \right)
\right.\right.\\&\qquad\qquad\left.\left.
           + \frac{2}{D} \sum_{i=2}^{D/2} \left[ \cos(\omega_i\tp (x - y)) + \cos(\omega_i\tp (x + y) + 2 b_i) \right]
           - k(x, y)
           \right)^2 \ud\mu(x, y)
\right.\\&\quad\left.
       - \int_{\X^2} \left(
             \frac{2}{D} \left( \cos(\hat\omega_1\tp (x - y)) + \cos(\hat\omega_1\tp (x + y) + 2 b_i) \right)
\right.\right.\\&\qquad\qquad\left.\left.
           + \frac{2}{D} \sum_{i=2}^{D/2} \left[ \cos(\omega_i\tp (x - y)) + \cos(\omega_i\tp (x - y) + 2 b_i) \right]
           - k(x, y)
           \right)^2 \ud\mu(x, y)
       \right\rvert
\\& =  \left\lvert
         \frac{4}{D^2} \int_{\X^2}
           \left(
             \left( \cos(\omega_1\tp (x - y)) + \cos(\omega_1\tp(x + y) + 2 b_i) \right)^2
           - \left( \cos(\hat\omega_1\tp (x - y)) + \cos(\hat\omega_1\tp(x + y) + 2 b_i) \right)^2
           \right)
           \ud\mu(x, y)
\right.\\&\quad\left.
       + \frac{4}{D} \int_{\X^2}
           \left(
             \cos(\omega_1\tp (x - y)) + \cos(\omega_1\tp(x + y) + 2 b_i)
           - \cos(\hat\omega_1\tp (x - y)) - \cos(\hat\omega_1\tp(x + y) + 2 b_i)
           \right)
\right.\\&\qquad\qquad\qquad\left.
           \left(
             \frac{2}{D} \sum_{i=2}^{D/2} \left[ \cos(\omega_i\tp (x - y)) + \cos(\omega_i\tp (x + y) + 2 b_i) \right]
           - k(x, y)
           \right)
           \ud\mu(x, y)
       \right\rvert
\\&\le \frac{4}{D^2} \int_{\X^2} 8 \,\ud\mu(x, y)
     + \frac{4}{D} \int_{\X^2} 8 \,\ud\mu(x, y)
\\& =  \frac{32}{D^2} \mu(\X^2) + \frac{32}{D} \mu(\X^2)
\\& =  32 \frac{D + 1}{D^2} \mu(\X^2)
.\end{align}
\fi

\end{document}